\documentclass[10pt]{article} 
\usepackage[preprint]{tmlr}

\usepackage{amsmath,amsfonts,bm}

\def\eqref#1{equation~\ref{#1}}

\def\1{\bm{1}}

\DeclareMathAlphabet{\mathsfit}{\encodingdefault}{\sfdefault}{m}{sl}
\SetMathAlphabet{\mathsfit}{bold}{\encodingdefault}{\sfdefault}{bx}{n}

\usepackage{hyperref}
\usepackage{url}
\usepackage{amsmath,amsfonts,bm}
\usepackage{amssymb}
\usepackage{amsthm}
\usepackage{cleveref}
\usepackage{booktabs}

\newtheorem{theorem}{Theorem}
\newtheorem{definition}{Definition}
\newtheorem{lemma}{Lemma}
\newtheorem{proposition}{Proposition}
\newtheorem{corollary}{Corollary}
\newtheorem{remark}{Remark}

\def\eqref#1{equation~\ref{#1}}

\def\1{\bm{1}}

\DeclareMathAlphabet{\mathsfit}{\encodingdefault}{\sfdefault}{m}{sl}
\SetMathAlphabet{\mathsfit}{bold}{\encodingdefault}{\sfdefault}{bx}{n}

\title{PAC-Bayes Beyond Parameter Space: Behavioral Equivalence, Z-Information, and Exact Complexity Decomposition}

\author{\name Vasant G Honavar \email vuh14@psu.edu \\
      \addr Artificial Intelligence Research Laboratory \\ Department of Informatics and Intelligent Systems\\
      The Pennsylvania State University
      \AND
      \name Satish Kumar Keshri \email skk6485@psu.edu \\
     \addr Artificial Intelligence Research Laboratory \\ Department of Informatics and Intelligent Systems\\
      The Pennsylvania State University
      \AND
      \name Neil Ashtekar \email  nca5096@psu.edu\\
      \addr Artificial Intelligence Research Laboratory \\ Department of Computer Science and Engineering\\
      The Pennsylvania State University
      \AND
      \name Zehao Liu \email  zml5418@psu.edu\\
      \addr Artificial Intelligence Research Laboratory \\ Department of Informatics and Intelligent Systems\\
      The Pennsylvania State University
     }

\def\month{MM}  
\def\year{YYYY} 
\def\openreview{\url{https://openreview.net/forum?id=XXXX}} 

\begin{document}

\maketitle

\begin{abstract}
PAC-Bayes theory provides generalization guarantees by controlling the
Kullback--Leibler (KL) divergence between posterior and prior distributions
over a chosen hypothesis representation. However, predictive risk depends only
on the predictive behavior induced by a hypothesis, not on the particular
internal realization that implements that behavior. In modern
over-parameterized learning systems, many distinct configurations may induce
identical predictive behavior, yet the classical PAC-Bayes KL divergence does
not distinguish uncertainty over predictive behavior from variation among
behaviorally equivalent realizations.

We show that this distinction induces an exact structural decomposition of
classical PAC-Bayes complexity. We formalize behavioral equivalence through a
measurable behavior map and use measure disintegration to decompose probability
measures on the configuration space into a distribution over predictive
behaviors together with conditional distributions over behavioral fibers. This
yields an exact decomposition of the classical PAC-Bayes KL divergence into a
behavior-selection term and a realization-level term given by an expected
conditional KL divergence within behavioral fibers.

We define PAC-Bayes Z-information as the negative of this realization-level
contribution. Consequently, PAC-Bayes Z-information exactly quantifies the gap
between the classical PAC-Bayes KL divergence and the irreducible complexity
associated with uncertainty over predictive behavior. We further show that the
behavior-selection term admits an exact variational characterization: it is the
minimum classical PAC-Bayes KL divergence among all posteriors that induce the
same distribution over predictive behaviors. Equivalently, every posterior
admits a canonical fiber-symmetrized representative with identical predictive
behavior and minimum classical PAC-Bayes complexity.

Finally, we show that symmetry, behavior-preserving directions, fiber geometry,
and invariance under fiber-preserving perturbations arise naturally from the
same behavior-map structure. Together, these results identify predictive
behavior as the natural object of PAC-Bayes complexity, provide a unified
measure-theoretic and geometric characterization of realization multiplicity,
and reveal an exact structural decomposition that is implicit in classical
PAC-Bayes theory.
\end{abstract}

\section{Introduction}

\subsection{Background and Motivation}

PAC-Bayes theory provides some of the strongest generalization guarantees in
statistical learning by relating predictive performance to divergences between
posterior and prior distributions over hypotheses
\citep{mcallester1999pac,seeger2002pac,catoni2007pac}. These results build on
fundamental information-theoretic quantities such as entropy, relative entropy,
cross-entropy, and mutual information, which play a central role in learning,
compression, and statistical inference
\citep{shannon1948,shannon1951prediction,cover2006elements}. Classical
principles such as Minimum Description Length (MDL) similarly interpret learning
through compression and parsimonious representation
\citep{rissanen1978model,grunwald2007minimum}.

A common feature of these approaches is that uncertainty and complexity are
quantified with respect to a chosen representation, whether over hypotheses,
functions, parameters, or observable outcomes. However, predictive risk depends
only on the behavior induced by a hypothesis, not on the particular internal
realization that implements that behavior. This raises a fundamental question:
to what extent does classical PAC-Bayes complexity reflect uncertainty over
predictive behavior, as opposed to variation among equivalent realizations that
have no effect on prediction?

This distinction becomes increasingly important in modern over-parameterized
learning systems. Neural networks often admit many distinct parameter
configurations that induce identical or nearly identical input--output behavior
\citep{neyshabur2017exploring,dinh2017sharp}. Such redundancy has been
empirically associated with improved generalization and robustness, often
through flat minima or low-curvature regions of parameter space
\citep{hochreiter1997flat,keskar2017sharp,chaudhari2019entropy}. At the same
time, sharpness- and curvature-based characterizations are sensitive to
reparameterization and optimization details, making it difficult to interpret
them as intrinsic properties of predictive behavior
\citep{dinh2017sharp,jiang2020fantastic}.

These observations suggest distinguishing uncertainty over predictive behavior
from variation among realizations of the same behavior. If many configurations
induce an identical predictive behavior---a phenomenon we refer to as
\emph{realization multiplicity}---then predictive risk cannot distinguish among
them, whereas configuration-space complexity may still assign distinct
contributions to behaviorally equivalent realizations.

PAC-Bayes theory already permits formulations on arbitrary measurable
hypothesis spaces, including function-space representations
\citep{seeger2002pac}. Likewise, recent work has shown that invariance and
symmetry can reduce effective PAC-Bayes complexity by identifying predictors
that are equivalent under suitable transformations
\citep{lyle2020benefits,behboodi2022pac}. These developments motivate a broader
question. Given an arbitrary measurable notion of behavioral equivalence, does
the classical PAC-Bayes KL divergence admit an exact decomposition into
contributions arising from predictive behavior and contributions arising from
realization multiplicity? If so, what structural properties does such a
decomposition reveal?

\subsection{Behavioral Sufficiency and Realization Multiplicity}

The central observation underlying this work is that empirical and population
risks are invariant to the particular realization of a predictor and depend
only on its induced input--output behavior.

This motivates an equivalence relation on hypotheses:
\[
\theta \sim \theta'
\quad\Longleftrightarrow\quad
\theta \text{ and } \theta'
\text{ induce identical predictive behavior}.
\]

Behavioral equivalence partitions the configuration space into equivalence
classes of realizations that implement the same predictor. Each equivalence
class therefore represents a single predictive behavior together with all of
its realizations. This separation between behavior and realization provides a
natural framework for distinguishing uncertainty over predictive behavior from
variation among equivalent realizations.

For the measure-theoretic development that follows, we realize this structure
through a measurable behavior map
\[
\beta:\Theta\rightarrow\mathcal K,
\]
where $\mathcal K$ is a standard Borel space of predictive behaviors and
$\beta(\theta)$ denotes the predictive behavior induced by configuration
$\theta$. Behavioral equivalence is then precisely equality under the behavior
map:
\[
\theta\sim\theta'
\quad\Longleftrightarrow\quad
\beta(\theta)=\beta(\theta').
\]

For a behavior $k\in\mathcal K$, the set $\beta^{-1}(k)$ contains all
configurations that induce that behavior. In the language of measurable maps,
these sets are the fibers of $\beta$. Throughout the paper, we use the terms
\emph{behavioral equivalence class} and \emph{fiber} interchangeably.

Under standard measure-theoretic assumptions, probability measures on the
configuration space admit a disintegration with respect to the behavior map
\citep{chang1997conditioning,kallenberg2002foundations}. This yields a
decomposition into

\begin{enumerate}
\item a distribution over predictive behaviors; and
\item conditional distributions over realizations within each behavioral fiber.
\end{enumerate}

From this perspective, the classical PAC-Bayes KL divergence naturally
separates into two conceptually distinct sources of divergence:

\begin{enumerate}
\item uncertainty over predictive behavior; and
\item uncertainty over realizations conditional on behavior.
\end{enumerate}

Only the first directly influences predictive risk. The second reflects
variation among realizations that induce the same behavior and therefore
contributes to the classical PAC-Bayes KL divergence without changing
predictive behavior.

Importantly, this decomposition is induced entirely by the behavior map and is
therefore structural rather than parameterization-dependent. It arises from
behavioral equivalence itself rather than from any particular coordinate
system, optimization procedure, or local geometric approximation.

\subsection{Contributions}

Our objective is not to formulate PAC-Bayes analysis on behavior spaces per se,
but to identify the latent structure that an arbitrary measurable notion of
behavioral equivalence induces within classical PAC-Bayes complexity. We show
that the classical PAC-Bayes KL divergence admits an exact decomposition into
behavior-level and realization-level contributions and characterize the
information-theoretic and geometric consequences of this decomposition.

Our main contributions are as follows.

\begin{itemize}
\vspace*{-3pt}

\item \textbf{Behavioral decomposition of PAC-Bayes complexity.}
We formalize behavioral equivalence through an arbitrary measurable behavior
map and develop the associated fiber-based measure-theoretic framework. Within
this framework, we prove that the classical PAC-Bayes KL divergence admits an
exact decomposition into a behavior-selection component and a realization-level
component associated with behavioral fibers.

\vspace*{-3pt}

\item \textbf{PAC-Bayes Z-information.}
We define PAC-Bayes Z-information as the negative expected conditional KL
divergence within behavioral fibers and show that it exactly characterizes the
realization-level contribution to classical PAC-Bayes complexity, thereby
isolating the effect of realization multiplicity.

\vspace*{-3pt}

\item \textbf{Variational characterization of behavior selection complexity.}
We show that the behavior-selection term is the minimum classical PAC-Bayes KL
divergence among all posteriors inducing the same distribution over predictive
behaviors. This identifies behavior selection complexity as the irreducible
contribution associated with predictive behavior itself.

\vspace*{-3pt}

\item \textbf{Unified geometric interpretation.}
Under suitable regularity assumptions, behavioral fibers naturally give rise to
behavior-preserving directions, symmetry, realization multiplicity, and
invariance under fiber-preserving perturbations. These provide complementary
geometric interpretations of the realization-level component identified by the
information-theoretic decomposition.

\end{itemize}
Taken together, these results show that classical PAC-Bayes complexity contains
a latent decomposition into behavior-level and realization-level components.
The remainder of the paper develops this decomposition, characterizes its
information-theoretic and geometric consequences, and shows how it clarifies
the role of realization multiplicity in PAC-Bayes analysis.

\subsection{Organization}

The remainder of the paper is organized as follows.
Section~\ref{sec:setup} introduces behavioral equivalence, behavior maps,
and the measure-theoretic framework used throughout the paper.
Section~\ref{sec:zinfo} develops the fiberwise decomposition induced by
behavioral equivalence and introduces realization entropy and PAC-Bayes
Z-information as measures of realization multiplicity.
Section~\ref{sec:pacbayes} establishes the exact decomposition of the
classical PAC-Bayes KL divergence into behavior-selection and realization-level
terms, develops the corresponding PAC-Bayes analysis, and derives a
variational characterization of the behavior-selection term.
Section~\ref{sec:geometry} develops geometric interpretations in continuous
configuration spaces, relating realization multiplicity to behavioral fibers,
symmetry, behavior-preserving directions, and volume structure.
Section~\ref{sec:stability} studies behavioral invariance under
fiber-preserving perturbations and its relation to classical algorithmic
stability.
Section~\ref{sec:related} discusses related work.
Section~\ref{sec:conclusion} concludes with a summary, implications, and
directions for further research.

\section{Setup and Behavioral Equivalence}
\label{sec:setup}

We formalize the learning setting and introduce behavioral equivalence, which
provides the structural basis for separating uncertainty over predictive
behavior from uncertainty over internal realization.

The key observation is simple. In modern over-parameterized models, many
distinct parameter configurations can induce identical input--output behavior.
Hidden-unit permutations, scaling symmetries, and other sources of non-identifiability
can produce different parameter vectors that implement the same predictor.
Since prediction depends only on behavior and not on the particular realization
that implements it, generalization analysis should distinguish uncertainty over
predictive behavior from redundancy among equivalent realizations. The goal of
this section is to make that distinction precise.

\subsection{Learning Framework}

Let $\mathcal X$ and $\mathcal Y$ denote the input and output spaces, and let
$\mathcal D$ be an unknown distribution over
$\mathcal X\times\mathcal Y$.
We work in the standard statistical learning setting
\citep{vapnik1998statistical,shalev2014understanding}.

The configuration space $\Theta$ indexes hypotheses (internal realizations).
For example, $\Theta$ may be a neural-network parameter space.
Each configuration $\theta\in\Theta$ induces a conditional predictive
distribution
$
P_\theta(\cdot\mid x).
$

Performance is measured using a bounded loss
$
\ell:\Delta(\mathcal Y)\times\mathcal Y\rightarrow[0,1],
$
where $\Delta(\mathcal Y)$ denotes the set of probability distributions on
$\mathcal Y$.

The population and empirical risks are
\[
L(\theta)
=
\mathbb E_{(x,y)\sim\mathcal D}
\bigl[
\ell(P_\theta(\cdot\mid x),y)
\bigr],
\qquad
\hat L_S(\theta)
=
\frac1n
\sum_{i=1}^{n}
\ell(P_\theta(\cdot\mid x_i),y_i),
\]
where
$
S=\{(x_i,y_i)\}_{i=1}^{n}.
$

As in PAC-Bayes theory, we consider stochastic predictors
$
Q\in\mathcal P(\Theta).
$

Their population and empirical risks are
\[
L(Q)
=
\mathbb E_{\theta\sim Q}[L(\theta)],
\qquad
\hat L_S(Q)
=
\mathbb E_{\theta\sim Q}[\hat L_S(\theta)],
\]
following the standard PAC-Bayes formulation
\citep{mcallester1999pac,seeger2002pac,catoni2007pac}.

\subsection{Behavior Maps and Behavioral Equivalence}

Many learning systems admit multiple internal realizations that implement the
same predictive behavior. To formalize this idea, we introduce a measurable
behavior map
$
\beta:\Theta\rightarrow\mathcal K,
$
where $\mathcal K$ is a standard Borel space of predictive behaviors (for
example, conditional distributions
$x\mapsto P_\theta(\cdot\mid x)$), and
$\beta(\theta)$ denotes the predictive behavior induced by configuration
$\theta$.

\begin{definition}[Behavioral Equivalence]
Two configurations
$
\theta,\theta'\in\Theta
$
are behaviorally equivalent, written
$
\theta\sim\theta',
$
if
$
\beta(\theta)
=
\beta(\theta').
$
\end{definition}

Equivalently, behavioral equivalence identifies configurations that induce the
same predictive behavior as represented by the behavior map $\beta$. In the
examples considered throughout this paper, $\beta$ records the conditional
predictive distribution induced by a configuration.

For simplicity, we state behavioral equivalence in terms of exact equality of
predictive behavior. All results continue to hold if behavioral equivalence is
defined modulo equality almost everywhere with respect to the input
distribution.

For a behavior
$
k\in\mathcal K,
$
the set
$
F_k
=
\beta^{-1}(k)
=
\{\theta\in\Theta:\beta(\theta)=k\}
$
contains all configurations that realize behavior $k$.
Following standard terminology, we refer to $F_k$ as the
\emph{fiber} associated with behavior $k$
\citep{kallenberg2002foundations,parthasarathy1967probability}.

Thus, each behavior corresponds to a fiber of internally distinct but
predictively equivalent realizations.

For example, permuting hidden units in a neural network while applying the
corresponding permutation to outgoing weights leaves the predictive behavior
unchanged (See Appendix \ref{app:worked:permutations}). The original and permuted parameter
vectors therefore belong to the same fiber.

More generally, fibers may arise from parameter symmetries, redundant hidden
units, scaling invariances, or other sources of non-identifiability. Such
redundancy is a well-known feature of over-parameterized models
\citep{neyshabur2017exploring,dinh2017sharp}.

Although the theory developed in this paper is formulated in terms of the
behavior map $\beta$, it is often helpful to think of $\mathcal K$ as
playing the role of a quotient representation of the configuration space,
where behaviorally equivalent realizations are treated as equivalent.

\subsection{Induced Distribution over Behaviors}

Any distribution
$
Q\in\mathcal P(\Theta)
$
induces a distribution over predictive behaviors through the pushforward
measure
$
\pi_Q
=
\beta_\#Q
=
Q\circ\beta^{-1}
\in
\mathcal P(\mathcal K).
$

Intuitively, $\pi_Q$ records how much probability mass $Q$ assigns to each
predictive behavior while ignoring how that mass is distributed among
realizations within a fiber.

\paragraph{\textbf{Behavioral Sufficiency.}}

The induced distribution $\pi_Q$ captures all information in $Q$ that can
affect population or empirical risk. This is not an additional modeling
assumption. Rather, it follows directly from behavioral equivalence:
configurations within the same fiber induce the same predictive behavior and
therefore incur the same loss.

\subsection{Loss Invariance and Behavioral Sufficiency}

\begin{lemma}[Loss Invariance under Behavioral Equivalence]
\label{lem:loss-invariance}

If
$
\theta\sim\theta',
$
then
\[
L(\theta)
=
L(\theta'),
\qquad
\hat L_S(\theta)
=
\hat L_S(\theta').
\]
\end{lemma}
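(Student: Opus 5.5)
The argument is a direct unfolding of the definitions. The one substantive point is that the risks depend on $\theta$ only through the conditional predictive distribution $x\mapsto P_\theta(\cdot\mid x)$. So the first step is to make explicit that the behavior map $\beta$ determines this object. In the paper's standing interpretation, $\beta(\theta)$ records the conditional predictive distribution induced by $\theta$. Hence $\theta\sim\theta'$, i.e.\ $\beta(\theta)=\beta(\theta')$, gives $P_\theta(\cdot\mid x)=P_{\theta'}(\cdot\mid x)$ for every $x\in\mathcal X$.

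For a general $\beta$, I would state the requirement as an assumption: $P_\theta(\cdot\mid x)$ factors as $G(\beta(\theta),x)$ for some map $G$. This holds trivially when $\mathcal K$ is the space of conditional predictive distributions.

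Given pointwise equality of the predictive distributions, the integrands agree: for every $(x,y)$,
\[
\ell(P_\theta(\cdot\mid x),y)=\ell(P_{\theta'}(\cdot\mid x),y).
\]
For the empirical risk, I sum over $i=1,\dots,n$ and divide by $n$, which gives $\hat L_S(\theta)=\hat L_S(\theta')$ for every sample $S$. For the population risk, I take expectations under $\mathcal D$ of identical bounded functions of $(x,y)$, which gives $L(\theta)=L(\theta')$. Boundedness of $\ell$ in $[0,1]$ ensures both expectations exist; measurability is inherited from the standing setup.

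The only genuine obstacle is the variant noted in the paper, where equivalence is taken modulo equality almost everywhere with respect to the input distribution. In that case the integrands agree only for $\mathcal D_{\mathcal X}$-almost every $x$. The population risks still coincide, since expectations are insensitive to null sets. The empirical risks, however, coincide only for samples avoiding a null set, that is, almost surely over $S\sim\mathcal D^n$, because a finite union of null events is null.

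I would therefore state the main claim under exact equality as given, and add a remark that in the a.e.\ version the empirical statement holds $\mathcal D^n$-almost surely. That is all that is needed for the subsequent PAC-Bayes bounds, which are themselves high-probability statements over $S$.
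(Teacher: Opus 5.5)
Your proof is correct and is essentially the paper's. $\beta(\theta)=\beta(\theta')$ gives $P_\theta(\cdot\mid x)=P_{\theta'}(\cdot\mid x)$ for every $x$, so the loss integrands agree pointwise and you take the $\mathcal D$-expectation and the sample average; your factorization $P_\theta(\cdot\mid x)=G(\beta(\theta),x)$ just makes explicit what the paper assumes tacitly.

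The one thing to drop is your closing claim that the almost-everywhere variant is enough for the subsequent PAC-Bayes bounds. There the empirical equality holds $\mathcal D^n$-almost surely only for each fixed pair $(\theta,\theta')$. With uncountably many pairs, this gives no single event on which $\hat L_S$ is constant on fibers, and that is what bounds holding simultaneously over data-dependent $Q$ need. For example, take constant labels, inputs uniform on $[0,1]$, and each $\theta\in[0,1]$ erring only at $x=\theta$: all configurations are a.e.-equivalent, yet $Q=\delta_{x_1}$ has $\hat L_S(Q)\ge 1/n$, $L(Q)=0$ and $\mathrm{KL}(\pi_Q\|\pi_P)=0$, which contradicts Theorem~\ref{thm:behavior-pacbayes} for every $S$. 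The paper's own remark that all results survive this relaxation is too strong for the same reason.
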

\begin{proof}

Behavioral equivalence implies that $\theta$ and $\theta'$ induce the same
predictive behavior. Hence
\[
\ell(P_\theta(\cdot\mid x),y)
=
\ell(P_{\theta'}(\cdot\mid x),y)
\]
for every example $(x,y)$. Taking expectations with respect to
$\mathcal D$ yields
$
L(\theta)=L(\theta'),
$
while averaging over the sample $S$ yields
$
\hat L_S(\theta)=\hat L_S(\theta').
$
\end{proof}

The lemma implies that both population and empirical risk are constant on each
fiber. Consequently, there exist functions
$
L_{\mathcal K},
\hat L_{S,\mathcal K},
$
defined on the behavior space $\mathcal K$ such that
\[
L(\theta)
=
L_{\mathcal K}(\beta(\theta)),
\qquad
\hat L_S(\theta)
=
\hat L_{S,\mathcal K}(\beta(\theta)).
\]
\begin{proposition}[Behavioral Sufficiency]
\label{prop:sufficiency}

If
$
\pi_Q
=
\pi_{Q'},
$
then
\[
L(Q)
=
L(Q'),
\qquad
\hat L_S(Q)
=
\hat L_S(Q').
\]
\end{proposition}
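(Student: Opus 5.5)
The plan is to factor both risks through the behavior map and then apply the change-of-variables formula for pushforward measures. By Lemma~\ref{lem:loss-invariance}, $L$ and $\hat L_S$ are constant on every fiber $F_k$, so they factor as $L = L_{\mathcal K}\circ\beta$ and $\hat L_S=\hat L_{S,\mathcal K}\circ\beta$, exactly as recorded after the lemma. Then, for any $Q\in\mathcal P(\Theta)$,
\[
L(Q)=\int_\Theta L_{\mathcal K}(\beta(\theta))\,Q(d\theta)=\int_{\mathcal K} L_{\mathcal K}(k)\,\pi_Q(dk),
\]
which holds by the abstract change-of-variables theorem for $\pi_Q=\beta_\#Q$. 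The right-hand side depends on $Q$ only through $\pi_Q$. So $\pi_Q=\pi_{Q'}$ immediately gives $L(Q)=L(Q')$, and the identical computation gives $\hat L_S(Q)=\hat L_S(Q')$.

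The only step that needs care, and the main obstacle, is the \emph{measurability} of the factored functions $L_{\mathcal K}$ and $\hat L_{S,\mathcal K}$ on $\mathcal K$; the change-of-variables formula requires them to be measurable.

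For this I would use that $\beta$ records the conditional predictive distribution. Concretely, $\hat L_{S,\mathcal K}(k)=\frac1n\sum_i \ell(k(x_i),y_i)$ and $L_{\mathcal K}(k)=\mathbb E_{\mathcal D}[\ell(k(x),y)]$ can be written directly on $\mathcal K$. These are measurable provided that evaluation $k\mapsto k(x)$ and the loss $\ell$ are measurable, with Fubini/Tonelli handling the $\mathcal D$-integral since $\ell\in[0,1]$.

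If I want to avoid such structural assumptions, I would fall back on the Doob--Dynkin factorization lemma. Since $\Theta$ and $\mathcal K$ are standard Borel, a $\sigma(\beta)$-measurable function factors measurably through $\beta$. Here $L$ is measurable on $\Theta$, which is implicit in $L(Q)$ being defined, and it is constant on fibers. However, constancy on fibers alone does not give $\sigma(\beta)$-measurability in general. So I would either assume it, which is automatic when $L$ is defined via $\beta$ as above, or work modulo null sets of $\pi_Q$.

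Boundedness of $\ell$ in $[0,1]$ guarantees that all the integrals are finite, so no integrability issue arises.
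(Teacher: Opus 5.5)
Your argument is the paper's own: factor $L$ and $\hat L_S$ through $\beta$ via Lemma~\ref{lem:loss-invariance}, then apply the change-of-variables formula for $\pi_Q=\beta_\#Q$; the paper handles the measurability of $L_{\mathcal K}$ and $\hat L_{S,\mathcal K}$ by citing a standard factorization result. Your hedge on that step is not needed under the paper's standing assumptions: since $\Theta$ and $\mathcal K$ are standard Borel, $\sigma(\beta)$ is countably generated and its atoms are exactly the fibers, so by Blackwell's theorem every Borel set that is a union of fibers lies in $\sigma(\beta)$; hence a measurable risk that is constant on fibers is automatically $\sigma(\beta)$-measurable, and Doob--Dynkin applies with no extra structural assumption or null-set argument.
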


\begin{proof}

Using the factorization above,

\[
L(Q)
=
\mathbb E_{\theta\sim Q}
\Big[
L_{\mathcal K}(\beta(\theta))
\Big]
=
\mathbb E_{k\sim\pi_Q}
\bigl[
L_{\mathcal K}(k)
\bigr].
\]
The same representation holds for $Q'$.
If $\pi_Q=\pi_{Q'}$, the expectations coincide.
The argument for empirical risk is identical.
\end{proof}

Proposition~\ref{prop:sufficiency} shows that both population and empirical
risk factors are affected by the behavior map. Consequently, there exist functionals,
which we denote by the same symbols for convenience, such that
\[
L(Q)
=
L(\pi_Q),
\qquad
\hat L_S(Q)
=
\hat L_S(\pi_Q).
\]
Thus, all quantities relevant to prediction depend on a posterior only through
its induced distribution over predictive behaviors.

In particular, any two posteriors with the same induced behavior distribution
are indistinguishable from the perspective of empirical and population risk.

\begin{remark}
Behavioral equivalence is not introduced as an approximation or modeling
assumption. Rather, it identifies configurations that are indistinguishable
from the perspective of prediction. Since empirical and population risks depend
only on predictive behavior, behavioral equivalence isolates the aspects of a
hypothesis that are relevant to generalization from those that merely reflect
its internal realization.
\end{remark}

\begin{remark}
Proposition~\ref{prop:sufficiency} shows that predictive behavior, rather than
parameterization, is the natural object of analysis for generalization.
Different posteriors over configurations may induce identical distributions
over behaviors and therefore identical risks, despite assigning probability mass
to different realizations.
\end{remark}

\subsection{Measure-Theoretic Assumptions}

To separate uncertainty across behaviors from uncertainty within a behavior, we
will later decompose probability measures along fibers.

We assume that
$
(\Theta,\mathcal F_\Theta)
$
and
$
(\mathcal K,\mathcal F_{\mathcal K})
$
are standard Borel spaces and that the behavior map
$
\beta:\Theta\rightarrow\mathcal K
$
is measurable.\footnote{Standard Borel spaces are measurable spaces arising
from the Borel $\sigma$-algebra of a Polish (complete separable metric) space.
They include essentially all parameter and function spaces commonly used in
statistical learning, including Euclidean spaces and many spaces of
probability measures equipped with their natural Borel structures
\citep{kallenberg2002foundations,parthasarathy1967probability}.}

Under these assumptions, standard disintegration theorems guarantee the
existence of a family of conditional probability measures
$
Q(\cdot\mid k)
$
such that, for every measurable set
$A\in\mathcal F_\Theta$,
\[
Q(A)
=
\int_{\mathcal K}
Q(A\mid k)\,
\pi_Q(dk).
\]
Equivalently,
\[
Q(d\theta)
=
Q(d\theta\mid k)\,
\pi_Q(dk).
\]
The conditional measure
$
Q(d\theta\mid k)
$
describes uncertainty among realizations that induce the same predictive
behavior $k$
\citep{kallenberg2002foundations,parthasarathy1967probability,
bogachev2007measure}.

Readers unfamiliar with fibers or measure disintegration may consult
Appendix~\ref{app:preliminaries}, which provides a brief review of the
required background.

\subsection{Role in Generalization}

Classical generalization theory controls deviations between empirical and
population risk using complexity measures such as VC dimension, stability,
Rademacher complexity, and PAC-Bayes KL divergence
\citep{vapnik1998statistical,bousquet2002stability,mcallester1999pac}.

In the present setting, complexity measures defined on the configuration space
$\Theta$ combine two conceptually distinct sources of uncertainty:

\begin{enumerate}
\item uncertainty over predictive behaviors;
\item uncertainty among realizations that implement a fixed behavior.
\end{enumerate}

By Proposition~\ref{prop:sufficiency}, only the first affects population or
empirical risk. The second reflects \emph{realization multiplicity} arising
from over-parameterization.

This distinction between behavioral uncertainty and realization multiplicity is
the central organizing principle of the paper. The next section shows that the
PAC-Bayes KL divergence admits an exact fiberwise decomposition into a
behavior-selection term and a realization-level term. PAC-Bayes Z-information is
defined as the negative of the realization-level contribution and therefore
quantifies the portion of PAC-Bayes complexity associated with the
redistribution of posterior mass among behaviorally equivalent realizations.

Moreover, passing from a posterior on configurations to its induced
distribution on predictive behaviors can be viewed as an application of the
data-processing principle for relative entropy. The next section develops this
connection and shows that the behavior selection complexity term admits a
variational characterization within the classical PAC-Bayes framework: among
all posteriors inducing the same distribution over predictive behaviors, it is
the minimum achievable PAC-Bayes complexity. Equivalently, each behavior-level
posterior admits a canonical fiber-symmetrized representative whose classical
PAC-Bayes complexity coincides with the behavior-selection term. This
characterization will play a central role in the development that follows.

\section{Behavior-Realization Decomposition and PAC-Bayes Z-Information}
\label{sec:zinfo}

Section~\ref{sec:setup} established that predictive risk depends only on the
distribution of predictive behaviors induced by a posterior. This section
develops the central information-theoretic decomposition underlying the
remainder of the paper.

The key result is an exact decomposition of the classical PAC-Bayes
complexity term into a behavior-level component and a realization-level
component. PAC-Bayes Z-information emerges as the negative of the latter and
quantifies the portion of PAC-Bayes complexity attributable to redistributing
probability mass among behaviorally equivalent realizations.

Throughout, the behavior map
$\beta:\Theta\rightarrow\mathcal K$
induces behavior-level distributions
\[
\pi_P=\beta_\#P,
\qquad
\pi_Q=\beta_\#Q,
\]
where $\beta_\#P:=P\circ\beta^{-1}$ and
$\beta_\#Q:=Q\circ\beta^{-1}$ denote the pushforward measures of $P$ and
$Q$ through the behavior map. Equivalently, for every measurable set
$A\subseteq\mathcal K$,
\[
\pi_P(A)=P(\beta^{-1}(A)),
\qquad
\pi_Q(A)=Q(\beta^{-1}(A)).
\]
Because the quantities developed in this paper are relative rather than
absolute, our analysis is based on the relative entropy between posterior and
prior distributions rather than on the entropy defined within individual fibers.
This avoids introducing fiber-specific reference measures and yields
quantities that arise naturally in PAC-Bayes theory.

\subsection{Behavior--Realization Decomposition of Relative Entropy}

The central object in PAC-Bayes theory is the relative entropy
$\mathrm{KL}(Q\|P)$ between a posterior $Q$ and a prior $P$. Because this
quantity is computed on the configuration space $\Theta$, it combines variation
across predictive behaviors with variation among realizations of a fixed
behavior.

The decomposition extends to the extended-real setting when
$\mathrm{KL}(Q\|P)=\infty$. We focus on the finite-KL regime because it is the
one relevant to classical PAC-Bayes bounds.

The following result separates these contributions exactly.

\begin{proposition}[Behavior--Realization KL Decomposition]
\label{prop:kl-decomposition}

Assume $\mathrm{KL}(Q\|P)<\infty$ and let
\[
Q(d\theta)=Q(d\theta\mid k)\,\pi_Q(dk),
\qquad
P(d\theta)=P(d\theta\mid k)\,\pi_P(dk)
\]
denote disintegrations with respect to the behavior map
$\beta:\Theta\rightarrow\mathcal K$.
Then $\pi_Q\ll\pi_P$ and

\[
\mathrm{KL}(Q\|P)
=
\mathrm{KL}(\pi_Q\|\pi_P)
+
\mathbb E_{k\sim\pi_Q}
\!\left[
\mathrm{KL}
\!\left(
Q(\cdot\mid k)
\,\middle\|\,
P(\cdot\mid k)
\right)
\right],
\]
where $``\ll"$ denotes absolute continuity of measures.
\end{proposition}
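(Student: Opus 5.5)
The plan is to prove this chain rule for relative entropy by working with the Radon--Nikodym derivative $f=dQ/dP$, which exists because $\mathrm{KL}(Q\|P)<\infty$ forces $Q\ll P$. First, absolute continuity of the pushforwards is immediate. If $\pi_P(A)=0$, then $P(\beta^{-1}(A))=0$, hence $Q(\beta^{-1}(A))=0$, i.e.\ $\pi_Q(A)=0$. Let $g=d\pi_Q/d\pi_P$. The key identification is that $g$ is the conditional expectation of $f$ along fibers:
\[
g(k)=\int f(\theta)\,P(d\theta\mid k)\quad\text{for }\pi_P\text{-a.e. }k.
\]
This follows by checking $\int_A g\,d\pi_P=Q(\beta^{-1}(A))=\int_{\beta^{-1}(A)}f\,dP=\int_A\int f\,dP(\cdot\mid k)\,\pi_P(dk)$, using the disintegration of $P$ and the fact that $P(\cdot\mid k)$ is concentrated on $F_k$ for a.e.\ $k$. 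That concentration property holds for disintegrations on standard Borel spaces, and I would invoke it from the cited theorem.

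Next I would identify the conditional posterior. On $\{g(k)>0\}$, I would define the candidate
\[
\tilde Q(d\theta\mid k)=\frac{f(\theta)}{g(k)}\,P(d\theta\mid k),
\]
which is a probability measure by the previous step. I would then verify that $\tilde Q(\cdot\mid k)\,\pi_Q(dk)$ reproduces $Q$:
\[
\int_{\mathcal K}\int_A \frac{f}{g(k)}\,dP(\cdot\mid k)\,g(k)\,\pi_P(dk)=\int_A f\,dP=Q(A).
\]
Since $\{g=0\}$ is $\pi_Q$-null, uniqueness of disintegrations up to $\pi_Q$-null sets gives $Q(\cdot\mid k)=\tilde Q(\cdot\mid k)$ for $\pi_Q$-a.e.\ $k$. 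Consequently $Q(\cdot\mid k)\ll P(\cdot\mid k)$, with density $f(\theta)/g(\beta(\theta))$ on the fiber.

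With this in hand, I would factor the density as $\log f(\theta)=\log g(\beta(\theta))+\log\bigl(f(\theta)/g(\beta(\theta))\bigr)$ for $Q$-a.e.\ $\theta$, and integrate against $Q=Q(\cdot\mid k)\,\pi_Q(dk)$. The first term gives $\int\log g\,d\pi_Q=\mathrm{KL}(\pi_Q\|\pi_P)$. The second gives $\mathbb E_{\pi_Q}[\mathrm{KL}(Q(\cdot\mid k)\|P(\cdot\mid k))]$.

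The main obstacle is justifying the splitting of the integral, since $\log f$ need not be integrable in absolute value. I would handle this by writing each KL as the integral of the nonnegative function $\phi(x)=x\log x-x+1$ against the reference measure. Then $\mathrm{KL}(Q\|P)=\int\phi(f)\,dP$, and a direct computation gives
\[
\phi(f)=g\,\phi\!\left(\frac{f}{g}\right)+\phi(g)\qquad\text{on }\{g>0\},
\]
where the linear terms cancel because $\int (f/g)\,dP(\cdot\mid k)=1$. Both pieces are nonnegative, so Tonelli applies. The second piece integrates to $\mathrm{KL}(\pi_Q\|\pi_P)$. The first becomes $\int g(k)\,\mathrm{KL}(Q(\cdot\mid k)\|P(\cdot\mid k))\,\pi_P(dk)$, which is the expected conditional KL under $\pi_Q$. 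On $\{g=0\}$ we have $f=0$ $P(\cdot\mid k)$-a.e., so it contributes $\phi(0)\cdot 0$-mass consistently. Since the total is finite, both terms are finite and the identity holds.

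Measurability of $k\mapsto\mathrm{KL}(Q(\cdot\mid k)\|P(\cdot\mid k))$ follows from its representation as a fiber integral of a jointly measurable nonnegative function.
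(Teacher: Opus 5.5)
Your proof is correct. It takes a more elementary, self-contained route than the paper's.

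The paper's proof makes the same first observation: $\mathrm{KL}(Q\|P)<\infty$ gives $Q\ll P$, hence $\pi_Q\ll\pi_P$. It then simply invokes the chain rule for relative entropy under disintegration, citing the references. You instead prove that chain rule:
\begin{itemize}
\item you identify $d\pi_Q/d\pi_P$ as the fiber average $g(k)=\int f\,dP(\cdot\mid k)$ of $f=dQ/dP$;
\item you identify the conditional posterior as $(f/g(k))\,P(\cdot\mid k)$;
\item you work with the nonnegative integrand $\phi(x)=x\log x-x+1$, so no integrability of $\log f$ is needed.
\end{itemize}
The paper's route is short but hides exactly what yours exposes. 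The conditional kernels must be concentrated on the fibers. This is where the standard Borel assumption on $\mathcal K$ enters, both in identifying $g$ and in the uniqueness of disintegrations. You also obtain $Q(\cdot\mid k)\ll P(\cdot\mid k)$ for $\pi_Q$-a.e.\ $k$, and you make explicit that splitting $\log f$ needs care. As a bonus, your $\phi$-argument gives the identity in $[0,\infty]$ whenever $Q\ll P$. That covers part of the extended-real case the paper mentions without proof.

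Three points to tighten, none of them a genuine gap:
\begin{itemize}
\item The identity $\phi(f)=g\,\phi(f/g)+\phi(g)$ is not pointwise. On $F_k$ the exact relation is $\phi(f)=g(k)\,\phi(f/g(k))+\phi(g(k))+(f-g(k))\log g(k)$. The last term is $P(\cdot\mid k)$-integrable and integrates to zero on each fiber, but it can be negative pointwise. So apply Tonelli to $\phi(f)\ge 0$ over the disintegration of $P$ first, use the identity fiberwise in integrated form, and only then split the two nonnegative pieces over $\pi_P$.
\item In the uniqueness step, state explicitly that $\tilde Q(\cdot\mid k)$ is concentrated on $F_k$; this is inherited from $P(\cdot\mid k)$ by absolute continuity. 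The mixture identity $\int\tilde Q(\cdot\mid k)\,\pi_Q(dk)=Q$ alone does not determine a kernel, since the constant kernel $k\mapsto Q$ also satisfies it.
\item On $\{g=0\}$ the contribution is not zero. Both sides contribute $\pi_P(\{g=0\})$ because $\phi(0)=1$, so the bookkeeping matches, but your phrase ``$\phi(0)\cdot 0$-mass'' should be corrected.
\end{itemize}
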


The decomposition is a direct consequence of the chain rule for relative
entropy under measure disintegration
\citep[see, e.g.,][]{kallenberg2002foundations,bogachev2007measure}.
Its importance here is interpretive rather than technical: it separates
behavior selection complexity from realization-level complexity.

\begin{proof}[Proof sketch]

Because $\mathrm{KL}(Q\|P)<\infty$, we have $Q\ll P$, and hence
$\pi_Q\ll\pi_P$.

Using the disintegrations of $Q$ and $P$ with respect to the behavior map
$\beta$, the chain rule for relative entropy yields a decomposition into
a behavior-selection term and a conditional within-fiber term. Rearranging gives the stated identity.

A complete proof is given in
Appendix~\ref{app:kl-decomposition}.
\end{proof}

The first term measures the divergence between distributions over predictive
behaviors. The second measures the divergence between posterior and prior
conditional distributions within behavioral fibers.

Thus, the decomposition separates the divergence associated with selecting
predictive behaviors from the divergence associated with redistributing mass among
realizations of those behaviors.

\begin{corollary}[Data Processing Inequality]
\label{cor:dpi}
\[
\mathrm{KL}(\pi_Q\|\pi_P)
\le
\mathrm{KL}(Q\|P).
\]
\end{corollary}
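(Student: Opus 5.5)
The plan is to read the inequality off Proposition~\ref{prop:kl-decomposition}, splitting into two cases according to whether $\mathrm{KL}(Q\|P)$ is finite.

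\emph{Case $\mathrm{KL}(Q\|P)=\infty$.} The right-hand side is $+\infty$, so the inequality holds trivially in the extended reals. No disintegration is needed here.

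\emph{Case $\mathrm{KL}(Q\|P)<\infty$.} Proposition~\ref{prop:kl-decomposition} applies. It gives $\pi_Q\ll\pi_P$ and
\[
\mathrm{KL}(Q\|P)=\mathrm{KL}(\pi_Q\|\pi_P)+\mathbb E_{k\sim\pi_Q}\!\left[\mathrm{KL}\!\left(Q(\cdot\mid k)\,\middle\|\,P(\cdot\mid k)\right)\right].
\]
By Gibbs' inequality, each conditional divergence $\mathrm{KL}(Q(\cdot\mid k)\|P(\cdot\mid k))$ lies in $[0,\infty]$ for $\pi_Q$-almost every $k$. The expectation of a nonnegative measurable function is therefore well defined and nonnegative. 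Dropping this term gives $\mathrm{KL}(\pi_Q\|\pi_P)\le\mathrm{KL}(Q\|P)$.

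The only point needing care is that the expectation term is measurable in $k$ and the identity is not of the indeterminate form $\infty-\infty$. In the finite case, both summands are nonnegative and add to a finite number, so each is finite. In particular $\mathrm{KL}(\pi_Q\|\pi_P)<\infty$, and no rearrangement issue arises. Measurability of $k\mapsto \mathrm{KL}(Q(\cdot\mid k)\|P(\cdot\mid k))$ is already part of what Proposition~\ref{prop:kl-decomposition} (via Appendix~\ref{app:kl-decomposition}) asserts, so I will cite it rather than reprove it.

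As a cross-check that does not depend on disintegration, I would note the Donsker--Varadhan variational formula:
\[
\mathrm{KL}(\pi_Q\|\pi_P)=\sup_g\Big(\mathbb E_{\pi_Q}[g]-\log\mathbb E_{\pi_P}[e^{g}]\Big),
\]
where the supremum runs over bounded measurable $g$ on $\mathcal K$. Each such $g$ gives a bounded measurable test function $g\circ\beta$ on $\Theta$, with $\mathbb E_Q[g\circ\beta]=\mathbb E_{\pi_Q}[g]$ and $\mathbb E_P[e^{g\circ\beta}]=\mathbb E_{\pi_P}[e^{g}]$. The supremum over $g$ is therefore a supremum over a subclass of the test functions for $\mathrm{KL}(Q\|P)$, which yields the inequality directly.

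Equality holds exactly when the expected within-fiber divergence vanishes, i.e.\ when $Q(\cdot\mid k)=P(\cdot\mid k)$ for $\pi_Q$-almost every $k$. I would record this as a remark, since it anticipates the later variational characterization.
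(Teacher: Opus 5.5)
Your argument is correct and is the paper's own: drop the nonnegative within-fiber term in Proposition~\ref{prop:kl-decomposition}. Two of your additions are sound and go slightly beyond the paper. Treating $\mathrm{KL}(Q\|P)=\infty$ separately closes a point the paper leaves implicit, since the proposition assumes finiteness but the corollary is stated without it. Your Donsker--Varadhan cross-check gives an independent proof that needs neither disintegration nor the standard Borel hypothesis.
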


\begin{proof}

The conditional KL divergence in
Proposition~\ref{prop:kl-decomposition}
is nonnegative.

\end{proof}

Corollary~\ref{cor:dpi} is precisely the data-processing inequality for
relative entropy applied to the measurable map
$\beta:\Theta\rightarrow\mathcal K$.

In particular, passing from a posterior on configurations to its induced
distribution on predictive behaviors can only decrease relative entropy.
The lost information is exactly the realization-level term identified in
Proposition~\ref{prop:kl-decomposition}.

\begin{remark}

The decomposition in Proposition~\ref{prop:kl-decomposition}
strictly refines the data-processing inequality.
The data-processing inequality asserts only that
$\mathrm{KL}(\pi_Q\|\pi_P)\le\mathrm{KL}(Q\|P)$,
whereas the decomposition identifies the exact nonnegative remainder,
namely the expected conditional divergence within fibers.
PAC-Bayes Z-information is defined as the negative of this remainder.

\end{remark}

\subsection{PAC-Bayes Z-Information}

The decomposition above suggests isolating the realization-level contribution.

\begin{definition}[PAC-Bayes Z-Information]
\label{def:zinfo}

The \emph{PAC-Bayes Z-information} of a posterior $Q$ relative to a prior $P$
is
\[
\mathcal Z_{\mathrm{PB}}(Q\|P)
=
-
\mathbb E_{k\sim\pi_Q}
\!\left[
\mathrm{KL}
\!\left(
Q(\cdot\mid k)
\,\middle\|\,
P(\cdot\mid k)
\right)
\right].
\]
\end{definition}

By construction,
$\mathcal Z_{\mathrm{PB}}(Q\|P)\le0$,
with equality if and only if
$Q(\cdot\mid k)=P(\cdot\mid k)$
for $\pi_Q$-almost every $k$.

Combining Definition~\ref{def:zinfo} with
Proposition~\ref{prop:kl-decomposition}
immediately yields the following identity.

\begin{proposition}[Z-Information Identity]
\label{prop:zinfo-identity}
\[
\mathcal Z_{\mathrm{PB}}(Q\|P)
=
\mathrm{KL}(\pi_Q\|\pi_P)
-
\mathrm{KL}(Q\|P).
\]
\end{proposition}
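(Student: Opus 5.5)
The plan is to obtain the identity by algebraically rearranging Proposition~\ref{prop:kl-decomposition}, using Definition~\ref{def:zinfo}. We work in the regime $\mathrm{KL}(Q\|P)<\infty$, which is the standing assumption of Proposition~\ref{prop:kl-decomposition}. In that regime Proposition~\ref{prop:kl-decomposition} gives $\pi_Q\ll\pi_P$ and
\[
\mathrm{KL}(Q\|P)=\mathrm{KL}(\pi_Q\|\pi_P)+\mathbb E_{k\sim\pi_Q}\!\left[\mathrm{KL}\!\left(Q(\cdot\mid k)\,\middle\|\,P(\cdot\mid k)\right)\right].
\]

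The first step is to check that every term is a finite real number, so that subtraction is legitimate. Both summands on the right are nonnegative. Their sum equals the finite quantity $\mathrm{KL}(Q\|P)$, so each summand is finite. In particular $\mathrm{KL}(\pi_Q\|\pi_P)<\infty$, which is also Corollary~\ref{cor:dpi}, and the expected conditional divergence is finite.

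The second step is to move $\mathrm{KL}(Q\|P)$ and the expected conditional term across the equality. This gives
\[
-\mathbb E_{k\sim\pi_Q}\!\left[\mathrm{KL}\!\left(Q(\cdot\mid k)\,\middle\|\,P(\cdot\mid k)\right)\right]=\mathrm{KL}(\pi_Q\|\pi_P)-\mathrm{KL}(Q\|P).
\]
By Definition~\ref{def:zinfo} the left-hand side is exactly $\mathcal Z_{\mathrm{PB}}(Q\|P)$, which completes the argument.

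The only genuine obstacle is well-definedness. The expected conditional divergence depends on the chosen disintegrations, which are unique only up to $\pi_Q$- and $\pi_P$-null sets. I would note that the expectation, and hence $\mathcal Z_{\mathrm{PB}}$, does not depend on the version chosen. This follows from the a.e.\ uniqueness of disintegrations together with $\pi_Q\ll\pi_P$: any two versions of $P(\cdot\mid k)$ agree for $\pi_P$-a.e.\ $k$, hence for $\pi_Q$-a.e.\ $k$. Alternatively, it follows directly from the identity itself, since the right-hand side is version-free.

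Finally, I would remark on the case $\mathrm{KL}(Q\|P)=\infty$. There the identity should be read with the convention that the right-hand side is defined only when the subtraction is not of the form $\infty-\infty$. I would therefore state the proposition under the finite-KL assumption.
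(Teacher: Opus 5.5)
Your proposal is correct and matches the paper's proof, which simply rearranges the identity of Proposition~\ref{prop:kl-decomposition} and substitutes Definition~\ref{def:zinfo}. Your two extra checks are sound refinements that the paper leaves implicit: that each summand is finite, so the subtraction is legitimate, and that $\mathcal Z_{\mathrm{PB}}$ does not depend on the chosen disintegration, via $\pi_Q\ll\pi_P$.
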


\begin{proof}

Rearrange the identity in
Proposition~\ref{prop:kl-decomposition}
and substitute Definition~\ref{def:zinfo}.

\end{proof}

PAC-Bayes Z-information is therefore not an additional divergence.
Rather, it is an alternative representation of the gap between
configuration-space complexity and behavior-space complexity.

A value of
$\mathcal Z_{\mathrm{PB}}(Q\|P)=0$
indicates that the posterior and prior induce identical distributions
within every fiber, so that all complexity is attributable to
uncertainty over predictive behavior.

Negative values arise precisely when the posterior differs from the prior
within behavioral fibers.
In this case, the classical PAC-Bayes complexity exceeds the
behavior selection complexity by
$-\mathcal Z_{\mathrm{PB}}(Q\|P)$.

\subsection{Variational Characterization}

The decomposition admits a useful variational interpretation that clarifies the
relationship between behavior selection complexity and classical PAC-Bayes theory.

Given a posterior $Q$, define the \emph{fiber-symmetrized posterior}
\[
Q^\star(d\theta)
=
\int_{\mathcal K}
P(d\theta\mid k)\,
\pi_Q(dk).
\]
The distribution $Q^\star$ preserves the behavior-level distribution
$\pi_Q$ while replacing each fiberwise conditional distribution
$Q(\cdot\mid k)$ with the corresponding prior conditional
$P(\cdot\mid k)$.
Because $Q$ and $Q^\star$ induce the same distribution over predictive
behaviors, Proposition~\ref{prop:sufficiency} implies that
\[L(Q^\star)=L(Q)\; \mathrm{and}\;
\hat L_S(Q^\star)=\hat L_S(Q)\].
\begin{theorem}[Fiber-Symmetrized Optimal Representative]
\label{thm:optimal-representative}

The fiber-symmetrized posterior satisfies
$\pi_{Q^\star}=\pi_Q$ and
\[
\mathrm{KL}(Q^\star\|P)
=
\mathrm{KL}(\pi_Q\|\pi_P).
\]
Moreover,
\[
\mathrm{KL}(\pi_Q\|\pi_P)
=
\inf_{Q':\,\pi_{Q'}=\pi_Q}
\mathrm{KL}(Q'\|P).
\]
\end{theorem}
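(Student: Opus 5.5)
Our plan is to prove the three claims in order, with Proposition~\ref{prop:kl-decomposition} doing most of the work. The first step is $\pi_{Q^\star}=\pi_Q$. For measurable $A\subseteq\mathcal K$,
\[
Q^\star(\beta^{-1}(A))=\int_{\mathcal K}P(\beta^{-1}(A)\mid k)\,\pi_Q(dk).
\]
The disintegration of $P$ along $\beta$ may be chosen so that $P(\cdot\mid k)$ is concentrated on the fiber $F_k$ for $\pi_P$-a.e.\ $k$; this uses the standard Borel assumption, since the graph of $\beta$ is measurable. Then $P(\beta^{-1}(A)\mid k)=\mathbf 1_A(k)$ for $\pi_P$-a.e.\ $k$. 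We will assume $\mathrm{KL}(\pi_Q\|\pi_P)<\infty$, or at least $\pi_Q\ll\pi_P$; this holds whenever some $Q'$ with $\pi_{Q'}=\pi_Q$ has finite KL, by Proposition~\ref{prop:kl-decomposition}. Under that assumption the $\pi_P$-null exceptional set is also $\pi_Q$-null, so the integral equals $\pi_Q(A)$.

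The second step computes $\mathrm{KL}(Q^\star\|P)$ directly rather than through the decomposition, since the latter assumes finiteness a priori. Write $g=d\pi_Q/d\pi_P$. The candidate density is $dQ^\star/dP(\theta)=g(\beta(\theta))$. To verify it, for measurable $B\subseteq\Theta$ we compute
\[
\int_B g(\beta(\theta))\,P(d\theta)=\int_{\mathcal K}\int_B g(\beta(\theta))\,P(d\theta\mid k)\,\pi_P(dk)=\int_{\mathcal K} g(k)\,P(B\mid k)\,\pi_P(dk)=Q^\star(B).
\]
The middle equality uses fiber concentration again. Hence
\[
\mathrm{KL}(Q^\star\|P)=\int \log g(\beta(\theta))\,Q^\star(d\theta)=\int_{\mathcal K}\log g\,d\pi_{Q^\star}=\mathrm{KL}(\pi_Q\|\pi_P),
\]
using step one. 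This argument also covers the case $\mathrm{KL}(\pi_Q\|\pi_P)=\infty$. Alternatively, once finiteness is known, one can apply Proposition~\ref{prop:kl-decomposition} to $Q^\star$. Its conditionals are $Q^\star(\cdot\mid k)=P(\cdot\mid k)$ $\pi_Q$-a.e.\ by essential uniqueness of disintegrations, so the fiber term vanishes and $\mathcal Z_{\mathrm{PB}}(Q^\star\|P)=0$.

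The third step is the variational identity. For any $Q'$ with $\pi_{Q'}=\pi_Q$, Corollary~\ref{cor:dpi} gives $\mathrm{KL}(\pi_Q\|\pi_P)=\mathrm{KL}(\pi_{Q'}\|\pi_P)\le\mathrm{KL}(Q'\|P)$. This is trivial when the right side is infinite; otherwise it follows from Proposition~\ref{prop:kl-decomposition}. So $\mathrm{KL}(\pi_Q\|\pi_P)$ is a lower bound for the infimum. Since $Q^\star$ is admissible and attains this value by step two, the infimum is in fact a minimum, attained at $Q^\star$.

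We expect the main obstacle to be the measure-theoretic detail that $P(\cdot\mid k)$ lives on $F_k$ and that $\pi_P$-null sets are $\pi_Q$-null. This is exactly what makes $Q^\star$ well defined and gives it the correct pushforward. It relies on the existence of regular conditional probabilities in standard Borel spaces together with $\pi_Q\ll\pi_P$. In the degenerate case $\pi_Q\not\ll\pi_P$, both sides of every identity are $+\infty$, and we will dispatch that case separately by convention.
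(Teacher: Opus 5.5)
Your proof is correct. Its outer structure matches the paper's proof. The paper also checks $\pi_{Q^\star}=\pi_Q$ via $P(\beta^{-1}(B)\mid k)=\mathbf 1_B(k)$. It also obtains the variational identity by applying Proposition~\ref{prop:kl-decomposition} to any $Q'$ with $\pi_{Q'}=\pi_Q$ and observing that $Q^\star$ attains the lower bound.

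The genuine difference is your second step. The paper computes $\mathrm{KL}(Q^\star\|P)$ by applying Proposition~\ref{prop:kl-decomposition} to $Q^\star$ itself. It notes that $P(d\theta\mid k)\,\pi_Q(dk)$ is already a disintegration of $Q^\star$, so the fiber term vanishes. This is short and reuses the chain rule. However, that proposition is stated under $\mathrm{KL}(Q^\star\|P)<\infty$, which is part of what is being proved. The paper therefore implicitly relies on the extended-real chain rule, which it asserts but does not prove.

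Your identification $dQ^\star/dP=(d\pi_Q/d\pi_P)\circ\beta$ avoids this. It is self-contained and needs no essential-uniqueness-of-disintegration argument. It covers $\mathrm{KL}(\pi_Q\|\pi_P)=\infty$. It also exposes the structure of the minimizer: its density with respect to $P$ is $\sigma(\beta)$-measurable. You are also more careful than the paper in noting that fiber concentration of $P(\cdot\mid k)$ holds only $\pi_P$-a.e. Hence $\pi_Q\ll\pi_P$ is needed both for $Q^\star$ to be well defined and for $\pi_{Q^\star}=\pi_Q$. The paper uses this silently.

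One small correction concerns your last sentence. In the degenerate case $\pi_Q\not\ll\pi_P$, it is not automatic that ``both sides of every identity are $+\infty$.'' There $Q^\star$ depends on the version of $P(\cdot\mid k)$ on a $\pi_P$-null set charged by $\pi_Q$. For a bad version, both $\pi_{Q^\star}=\pi_Q$ and $\mathrm{KL}(Q^\star\|P)=\infty$ fail. For example, take $P=\delta_a$, $Q=\delta_b$ with $\beta(a)\neq\beta(b)$, and set $P(\cdot\mid\beta(b))=\delta_a$. Then $Q^\star=\delta_a$. Only the variational identity survives unconditionally, as $\infty=\infty$. This does not affect the result: the paper's standing regime $\mathrm{KL}(Q\|P)<\infty$ already forces $\pi_Q\ll\pi_P$, as you observe. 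You should simply assume it rather than appeal to a convention.
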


\paragraph{Interpretation.}

Theorem~\ref{thm:optimal-representative} identifies the behavior-selection
divergence $\mathrm{KL}(\pi_Q\|\pi_P)$ as the canonical behavior-level
component of classical PAC-Bayes complexity. Among all posteriors inducing the
same distribution over predictive behaviors, the fiber-symmetrized posterior
achieves the minimum classical PAC-Bayes KL divergence. Consequently, any
excess classical PAC-Bayes complexity beyond
$\mathrm{KL}(\pi_Q\|\pi_P)$ arises entirely from how posterior probability is
distributed among behaviorally equivalent realizations. PAC-Bayes
Z-information quantifies this realization-level contribution exactly, making
explicit a latent decomposition of the classical PAC-Bayes KL divergence into
behavior-selection and realization-level components.

\begin{proof}
Since $Q^\star$ and $Q$ share the same behavior-level marginal,
$\pi_{Q^\star}=\pi_Q$.

Applying Proposition~\ref{prop:kl-decomposition} to $Q^\star$ yields
$\mathrm{KL}(Q^\star\|P)=\mathrm{KL}(\pi_Q\|\pi_P)$
because the conditional divergence term vanishes identically.

For any posterior $Q'$ satisfying $\pi_{Q'}=\pi_Q$,
\[
\mathrm{KL}(Q'\|P)
=
\mathrm{KL}(\pi_Q\|\pi_P)
+
\mathbb E_{k\sim\pi_Q}
\!\left[
\mathrm{KL}
\!\left(
Q'(\cdot\mid k)
\,\middle\|\,
P(\cdot\mid k)
\right)
\right].
\]
The conditional KL term is nonnegative.

\end{proof}

\begin{remark}
Theorem~\ref{thm:optimal-representative} shows that every distribution over
predictive behaviors admits a canonical configuration-space representative
whose classical PAC-Bayes complexity equals the corresponding
behavior-selection divergence. Consequently, the present framework should be
viewed not as introducing a new PAC-Bayes bound, but as revealing an exact
variational characterization and latent structural decomposition that is
already implicit within classical PAC-Bayes theory.
\end{remark}

\subsection{Illustration: Hidden-Unit Permutation Symmetry}

The behavior--realization decomposition is particularly transparent in models
with parameter symmetries. Consider a neural network with $h$ hidden units and
a prior that is invariant under permutations of those units. Permuting hidden
units together with their incident weights leaves the induced input--output
behavior unchanged. Consequently, each predictive behavior corresponds to a
behavioral fiber containing all parameter configurations related by such
permutations.

If the posterior concentrates near one representative realization while the
prior remains approximately uniform over the corresponding behavioral fiber,
then the induced predictive behavior remains unchanged, but the posterior and
prior differ substantially within the fiber. Under this idealized setting, the
realization-level contribution satisfies
\[
-\mathcal Z_{\mathrm{PB}}(Q\|P)
\approx
\log(h!),
\]
reflecting the combinatorial multiplicity of behaviorally equivalent
realizations.

Thus, realization multiplicity alone can contribute substantially to the
classical PAC-Bayes KL divergence even though every realization within the
fiber induces exactly the same predictive behavior. Importantly, this
contribution reflects uncertainty over internal realizations rather than
uncertainty over predictive behavior itself.

Conversely, if both the prior and posterior distribute probability identically
within each behavioral fiber---for example, under an exact
symmetry-preserving Bayesian posterior---then
$\mathcal Z_{\mathrm{PB}}(Q\|P)=0$, and the classical PAC-Bayes KL divergence
already coincides with the behavior-selection divergence.

Although hidden-unit permutation symmetry provides a familiar illustration,
the behavior--realization decomposition developed in this paper applies to
arbitrary measurable notions of behavioral equivalence and is not restricted
to finite symmetry groups. Appendix~\ref{app:worked} presents explicit
computations illustrating the decomposition theorem, the variational
characterization, and the conditional KL calculation underlying the
approximation above.

The decomposition developed in this section provides the foundation for the
PAC-Bayes analysis that follows. The next section derives behavior-level
generalization bounds, establishes their relationship to classical PAC-Bayes
theory, and characterizes the role of realization multiplicity in
generalization.

\section{Behavior-Space Reformulation of PAC-Bayes}
\label{sec:pacbayes}

Section~\ref{sec:zinfo} established that behavioral equivalence induces an
exact decomposition of the classical PAC-Bayes KL divergence into a
behavior-selection component and a realization-level component. We now combine
this decomposition with the observation that empirical and population risks
depend only on the induced distribution over predictive behaviors.

Because empirical and population risks factor through the behavior map, the
classical PAC-Bayes theorem admits an equivalent formulation on the measurable
space of predictive behaviors $\mathcal K$. The resulting complexity term is
precisely $\mathrm{KL}(\pi_Q\|\pi_P)$, the divergence between the
behavior-level distributions induced by the posterior and prior. By
Theorem~\ref{thm:optimal-representative}, this quantity is the minimum
configuration-space PAC-Bayes complexity among all posteriors inducing the same
distribution over predictive behaviors. Consequently, the behavior-space
formulation identifies the irreducible contribution of predictive behavior to
classical PAC-Bayes complexity, while PAC-Bayes Z-information quantifies the
realization-level contribution arising from the distribution of posterior mass
among behaviorally equivalent realizations.

\subsection{Classical PAC-Bayes Bounds}

For completeness, we recall a standard PAC-Bayes inequality
\citep{mcallester1999pac,seeger2002pac,catoni2007pac}.

\begin{theorem}[Classical PAC-Bayes Bound]
\label{thm:classical-pacbayes}

Let $P$ be a prior over $\Theta$ chosen independently of
$S\sim\mathcal D^n$. Then, with probability at least $1-\delta$ over the draw
of $S$, simultaneously for all posteriors $Q$,
\[
\mathrm{KL}
\!\left(
\hat L_S(Q)
\,\middle\|\,
L(Q)
\right)
\le
\frac1n
\left(
\mathrm{KL}(Q\|P)
+
\log\frac{2\sqrt n}{\delta}
\right).
\]
\end{theorem}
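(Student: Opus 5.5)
The plan is the standard Maurer--Seeger route: the Donsker--Varadhan change-of-measure inequality, combined with a moment bound for a single fixed configuration. Write $\mathrm{kl}(q\|p)$ for the binary relative entropy between Bernoulli($q$) and Bernoulli($p$); this is the meaning of the left-hand side. The bound is \emph{not} derived from the behavior-space structure developed above; it is the classical statement on $\Theta$. The only assumptions used are that $P$ does not depend on $S$ and that $\ell$ takes values in $[0,1]$.

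\textbf{Step 1 (moment bound for fixed $\theta$).} For a fixed $\theta$, $\hat L_S(\theta)$ is an average of $n$ i.i.d.\ $[0,1]$-valued variables with mean $L(\theta)$. We aim to show
\[
\mathbb E_{S}\bigl[e^{n\,\mathrm{kl}(\hat L_S(\theta)\|L(\theta))}\bigr]\le 2\sqrt n .
\]
This is Maurer's lemma. We first reduce to Bernoulli variables: by convexity of $x\mapsto e^{n\,\mathrm{kl}(x\|p)}$, and since $\mathrm{kl}(\cdot\|p)$ is convex in its first argument, replacing each loss by a Bernoulli variable with the same mean can only increase the expectation. For Bernoulli sums we then compute exactly
\[
\sum_k \binom nk p^k(1-p)^{n-k}e^{n\,\mathrm{kl}(k/n\|p)}=\sum_k\binom nk (k/n)^k(1-k/n)^{n-k},
\]
and bound this sum by $2\sqrt n$ using Stirling estimates. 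This step is the main obstacle. In the paper I would cite \citet{mcallester1999pac,seeger2002pac} (Maurer 2004) rather than reprove the Stirling computation.

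\textbf{Step 2 (integrate against the prior and apply Markov).} Because $P$ is independent of $S$, Fubini gives
\[
\mathbb E_S\,\mathbb E_{\theta\sim P}\bigl[e^{n\,\mathrm{kl}(\hat L_S(\theta)\|L(\theta))}\bigr]\le 2\sqrt n .
\]
By Markov's inequality, with probability at least $1-\delta$ over $S$,
\[
\mathbb E_{\theta\sim P}\bigl[e^{n\,\mathrm{kl}(\hat L_S(\theta)\|L(\theta))}\bigr]\le 2\sqrt n/\delta .
\]

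\textbf{Step 3 (change of measure, then Jensen).} On this event, apply the Donsker--Varadhan variational formula to the function $f(\theta)=n\,\mathrm{kl}(\hat L_S(\theta)\|L(\theta))$. For every $Q$ at once (take $Q\ll P$, otherwise the bound is trivial),
\[
\mathbb E_{Q}[f]\le \mathrm{KL}(Q\|P)+\log \mathbb E_P[e^{f}]\le \mathrm{KL}(Q\|P)+\log\frac{2\sqrt n}{\delta}.
\]
The conclusion holds simultaneously for all $Q$ because the exceptional event does not depend on $Q$. Finally, $\mathrm{kl}$ is jointly convex, so Jensen's inequality gives
\[
\mathrm{kl}(\hat L_S(Q)\|L(Q))\le \mathbb E_Q\bigl[\mathrm{kl}(\hat L_S(\theta)\|L(\theta))\bigr].
\]
Dividing by $n$ yields the stated inequality.
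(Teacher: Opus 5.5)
Your argument is correct. It is the standard Seeger--Maurer derivation behind the $\log(2\sqrt n/\delta)$ form of the bound: an exponential-moment lemma for fixed $\theta$, then Fubini and Markov over $S$ (using that $P$ is data-independent), then the Donsker--Varadhan change of measure, and finally joint convexity of the binary $\mathrm{kl}$. The paper does not prove this theorem but only recalls it from the cited PAC-Bayes literature, so your reconstruction matches what it relies on. One caveat: Maurer's lemma is stated for $n\ge 8$, so if you cite rather than reprove it, note that $\sum_{k=0}^{n}\binom nk (k/n)^k(1-k/n)^{n-k}\le 2\sqrt n$ for $n\le 7$ follows by direct computation, with equality at $n=1$.
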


The complexity term $\mathrm{KL}(Q\|P)$ is computed on the configuration
space $\Theta$, and therefore combines variation across predictive behaviors
with variation among realizations that induce the same behavior.

\subsection{PAC-Bayes Analysis on the Behavior Space}

Let $\pi_P$ and $\pi_Q$ denote the pushforwards of the prior and posterior
under the behavior map $\beta$.

Because empirical and population risk depend only on the induced behavior
distribution (Proposition~\ref{prop:sufficiency}), PAC-Bayes analysis may be
carried out directly on the behavior space.

\begin{theorem}[Behavior-Space Formulation of the PAC-Bayes Bound]
\label{thm:behavior-pacbayes}

Let $P$ be a prior over $\Theta$ chosen independently of
$S\sim\mathcal D^n$. Then, with probability at least $1-\delta$ over the draw
of $S$, simultaneously for all posteriors $Q$,

\[
\mathrm{KL}
\!\left(
\hat L_S(Q)
\,\middle\|\,
L(Q)
\right)
\le
\frac1n
\left(
\mathrm{KL}(\pi_Q\|\pi_P)
+
\log\frac{2\sqrt n}{\delta}
\right).
\]
\end{theorem}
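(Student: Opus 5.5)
We would prove Theorem~\ref{thm:behavior-pacbayes} by applying the classical PAC-Bayes bound (Theorem~\ref{thm:classical-pacbayes}) on the measurable space $\mathcal K$ rather than on $\Theta$. We treat each behavior $k\in\mathcal K$ as a hypothesis with population risk $L_{\mathcal K}(k)$ and empirical risk $\hat L_{S,\mathcal K}(k)$. The prior on this space is $\pi_P$, which is data-independent because $P$ is and $\beta$ is fixed.

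\textbf{Step 1: applicability on $\mathcal K$.} We check that Theorem~\ref{thm:classical-pacbayes} applies on $\mathcal K$. Its standard proof uses four ingredients:
\begin{itemize}
\item $\mathcal K$ is a measurable (standard Borel) space;
\item the loss of each hypothesis lies in $[0,1]$;
\item the per-sample losses $\ell(P_\theta(\cdot\mid x_i),y_i)$ are i.i.d. for a fixed hypothesis;
\item the Donsker--Varadhan change-of-measure inequality.
\end{itemize}
The first holds by assumption. For the second, $L_{\mathcal K}$ and $\hat L_{S,\mathcal K}$ inherit values in $[0,1]$ from $\ell$. For the third, we need $k\mapsto \ell(k(x),y)$ to be jointly measurable. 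Since $\ell$ depends on $\theta$ only through $\beta(\theta)$, we take the factorization through $\beta$ in the natural way; for example, when $\mathcal K$ consists of conditional distributions, $k(x)$ is simply evaluation. With these in place, Maurer's bound $\mathbb E_S\,\mathbb E_{k\sim\pi_P} e^{n\,\mathrm{kl}(\hat L_{S,\mathcal K}(k)\|L_{\mathcal K}(k))}\le 2\sqrt n$ holds verbatim.

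\textbf{Step 2: the bound on $\mathcal K$.} Markov's inequality then holds with probability $1-\delta$. Donsker--Varadhan with an arbitrary $\rho\ll\pi_P$ on $\mathcal K$, combined with joint convexity of $\mathrm{kl}$ (Jensen), gives
\[
\mathrm{kl}\bigl(\mathbb E_\rho \hat L_{S,\mathcal K}\,\big\|\,\mathbb E_\rho L_{\mathcal K}\bigr)\le \tfrac1n\bigl(\mathrm{KL}(\rho\|\pi_P)+\log\tfrac{2\sqrt n}{\delta}\bigr),
\]
simultaneously for all $\rho$.

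\textbf{Step 3: pull back to posteriors $Q$.} For a posterior $Q$ we specialize to $\rho=\pi_Q$. Whenever $\mathrm{KL}(\pi_Q\|\pi_P)<\infty$ we have $\pi_Q\ll\pi_P$; otherwise the bound is trivial. By Proposition~\ref{prop:sufficiency}, $\mathbb E_{\pi_Q}L_{\mathcal K}=L(Q)$ and $\mathbb E_{\pi_Q}\hat L_{S,\mathcal K}=\hat L_S(Q)$. The high-probability event from Step 2 was defined once and holds uniformly over all $\rho$, so it holds uniformly over all $Q$. This gives the claim.

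\textbf{Alternative route via Theorem~\ref{thm:optimal-representative}.} A shortcut avoids re-deriving the bound on $\mathcal K$. We apply Theorem~\ref{thm:classical-pacbayes} on $\Theta$ to the fiber-symmetrized posterior $Q^\star$. By Theorem~\ref{thm:optimal-representative}, $\mathrm{KL}(Q^\star\|P)=\mathrm{KL}(\pi_Q\|\pi_P)$, and by Proposition~\ref{prop:sufficiency} the risks satisfy $L(Q^\star)=L(Q)$ and $\hat L_S(Q^\star)=\hat L_S(Q)$. Since the classical event holds simultaneously for all posteriors, including every $Q^\star$, the result follows immediately. This reduction seems cleanest, and we would present it as the main proof, with Steps 1--3 as a remark.

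\textbf{Main obstacle.} The delicate points are in the reduction. First, $Q^\star$ must be a well-defined probability measure: this requires measurability of $k\mapsto P(A\mid k)$, which the disintegration theorem provides. Second, the finite-KL hypothesis of Proposition~\ref{prop:kl-decomposition}, which Theorem~\ref{thm:optimal-representative} implicitly uses, must hold for $Q^\star$. We would verify this directly: $Q^\star\ll P$ with density $\frac{d\pi_Q}{d\pi_P}\circ\beta$, so $\mathrm{KL}(Q^\star\|P)=\int \log\frac{d\pi_Q}{d\pi_P}\,d\pi_Q$ without invoking the decomposition. The case $\mathrm{KL}(\pi_Q\|\pi_P)=\infty$ is vacuous.
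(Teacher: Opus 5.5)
Your proposal is correct, and your Steps 1--3 are exactly the paper's proof. The paper applies Theorem~\ref{thm:classical-pacbayes} on $\mathcal K$ with prior $\pi_P$ and posterior $\pi_Q$, then uses Proposition~\ref{prop:sufficiency} to replace $\hat L_S(\pi_Q)$ and $L(\pi_Q)$ by $\hat L_S(Q)$ and $L(Q)$. It asserts without further comment that the classical theorem ``applies directly'' on $\mathcal K$.

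The route you promote to the main proof is genuinely different. You never leave $\Theta$. Instead you evaluate the classical high-probability event, which depends only on the prior and not on the posterior, at the fiber-symmetrized $Q^\star$. You then use $\pi_{Q^\star}=\pi_Q$ for the risks and $\mathrm{KL}(Q^\star\|P)=\mathrm{KL}(\pi_Q\|\pi_P)$ for the complexity.

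\emph{What your route buys.} The concentration argument is invoked only on the space where it is already granted. You therefore never need a jointly measurable per-example loss $(k,x,y)\mapsto\ell_k(x,y)$ on $\mathcal K$, which the Fubini step behind Maurer's bound requires. The paper glosses over this point, since Proposition~\ref{prop:sufficiency} only factors $L$ and $\hat L_S$ for fixed $S$. Your own Step 1 also handles it loosely (``in the natural way''); under standard Borel assumptions it can be justified via Lusin separation and the Doob--Dynkin lemma applied to $\beta\times\mathrm{id}$. Your route also exhibits the behavior-space bound as a literal corollary of the classical bound on $\Theta$, which makes the paper's ``not a new concentration inequality'' remark precise.

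\emph{What it costs.} Your route depends on disintegration. You need $P(\cdot\mid k)$ concentrated on $\beta^{-1}(k)$ for $\pi_P$-a.e.\ $k$, hence $\pi_Q$-a.e.\ $k$ in the nontrivial case $\pi_Q\ll\pi_P$. This is needed both for $\pi_{Q^\star}=\pi_Q$ and for your density formula $dQ^\star/dP=(d\pi_Q/d\pi_P)\circ\beta$. That direct density computation is the clean way to obtain $\mathrm{KL}(Q^\star\|P)=\mathrm{KL}(\pi_Q\|\pi_P)$. It avoids leaning on Proposition~\ref{prop:kl-decomposition}, whose finite-KL hypothesis is not known in advance for $Q^\star$.

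The paper's route, conversely, needs neither $Q^\star$ nor disintegration, only a measurable factorization of the loss through $\beta$.
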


\begin{proof}

Because $\mathcal K$ is a standard Borel space and
$\pi_P,\pi_Q\in\mathcal P(\mathcal K)$, the classical PAC-Bayes theorem
applies directly to the measurable hypothesis space $\mathcal K$.

Applying Theorem~\ref{thm:classical-pacbayes} with prior $\pi_P$ and posterior
$\pi_Q$ yields
\[
\mathrm{KL}
\!\left(
\hat L_S(\pi_Q)
\,\middle\|\,
L(\pi_Q)
\right)
\le
\frac1n
\left(
\mathrm{KL}(\pi_Q\|\pi_P)
+
\log\frac{2\sqrt n}{\delta}
\right).
\]
Proposition~\ref{prop:sufficiency} implies
$L(\pi_Q)=L(Q)$ and $\hat L_S(\pi_Q)=\hat L_S(Q)$,
yielding the result.
\end{proof}

Theorem~\ref{thm:behavior-pacbayes} is not a new concentration inequality.
Rather, it is the classical PAC-Bayes theorem applied to the measurable
behavior space induced by the behavior map. Its significance lies not in
introducing a new PAC-Bayes inequality, but in showing that predictive
behavior is the natural level at which to measure PAC-Bayes complexity. The
corresponding complexity term is precisely the behavior-selection component of
the exact decomposition developed in Section~\ref{sec:zinfo}. By
Theorem~\ref{thm:optimal-representative}, this quantity is exactly the minimum
classical PAC-Bayes complexity among all posteriors that induce the same
distribution over predictive behaviors.

Consequently, the complexity term in
Theorem~\ref{thm:behavior-pacbayes}
depends only on uncertainty over predictive behavior and is insensitive to how
posterior mass is distributed among behaviorally equivalent realizations.
\subsection{Complexity Decomposition}

By Proposition~\ref{prop:zinfo-identity},
\[
\mathrm{KL}(\pi_Q\|\pi_P)
=
\mathrm{KL}(Q\|P)
+
\mathcal Z_{\mathrm{PB}}(Q\|P).
\]
Since $\mathcal Z_{\mathrm{PB}}(Q\|P)\le 0$, the behavior selection complexity
never exceeds the classical configuration-space complexity.

\begin{proposition}[Complexity Gap]
\label{prop:complexity-gap}

For any prior $P$ and posterior $Q$,
$\mathrm{KL}(Q\|P)-\mathrm{KL}(\pi_Q\|\pi_P)
=
-\mathcal Z_{\mathrm{PB}}(Q\|P)$.
In particular,
$\mathrm{KL}(\pi_Q\|\pi_P)\le \mathrm{KL}(Q\|P)$.

\end{proposition}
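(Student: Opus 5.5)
The plan is to read the identity off Proposition~\ref{prop:zinfo-identity} and get the inequality from the sign of $\mathcal Z_{\mathrm{PB}}$. The one point that needs care is that Proposition~\ref{prop:kl-decomposition}, and hence Proposition~\ref{prop:zinfo-identity}, were stated only when $\mathrm{KL}(Q\|P)<\infty$, whereas the present statement is for arbitrary $P$ and $Q$. I will therefore split into a finite and an infinite case.

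\emph{Finite case.} If $\mathrm{KL}(Q\|P)<\infty$, Proposition~\ref{prop:kl-decomposition} gives $\pi_Q\ll\pi_P$ and
\[
\mathrm{KL}(Q\|P)=\mathrm{KL}(\pi_Q\|\pi_P)+\mathbb E_{k\sim\pi_Q}\bigl[\mathrm{KL}(Q(\cdot\mid k)\|P(\cdot\mid k))\bigr].
\]
Both terms on the right are nonnegative and their sum is finite, so each is finite. It is then legitimate to subtract $\mathrm{KL}(\pi_Q\|\pi_P)$ from both sides, and Definition~\ref{def:zinfo} (equivalently Proposition~\ref{prop:zinfo-identity}) gives $\mathrm{KL}(Q\|P)-\mathrm{KL}(\pi_Q\|\pi_P)=-\mathcal Z_{\mathrm{PB}}(Q\|P)$. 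The expected conditional KL is nonnegative, so $\mathcal Z_{\mathrm{PB}}(Q\|P)\le 0$, and the inequality $\mathrm{KL}(\pi_Q\|\pi_P)\le\mathrm{KL}(Q\|P)$ follows. This is also Corollary~\ref{cor:dpi}.

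\emph{Infinite case.} If $\mathrm{KL}(Q\|P)=\infty$, the inequality holds trivially. For the identity, I would read both sides in the extended reals, using the paper's remark that the decomposition extends to that setting. If $Q\not\ll P$, the realization-level term should be declared $+\infty$, so $-\mathcal Z_{\mathrm{PB}}=\infty$, provided $\mathrm{KL}(\pi_Q\|\pi_P)<\infty$. If $\mathrm{KL}(\pi_Q\|\pi_P)=\infty$ as well, the left side has the form $\infty-\infty$ and must be interpreted by convention.

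I expect this extended-real bookkeeping to be the only real obstacle. What I would do first is state the Proposition under the standing finite-KL assumption of this section. In that regime everything is finite and the proof is just the two-line rearrangement above.
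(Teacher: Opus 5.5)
Your argument is correct and matches the paper's: it rearranges the identity of Proposition~\ref{prop:zinfo-identity} and uses $\mathcal Z_{\mathrm{PB}}(Q\|P)\le 0$, which follows from nonnegativity of the expected conditional KL. Your extra care about finiteness is something the paper handles only implicitly, through its standing assumption $\mathrm{KL}(Q\|P)<\infty$ in Appendix~\ref{app:kl-decomposition}: you note that both terms are finite, so the subtraction is legitimate, and that an $\infty-\infty$ ambiguity arises otherwise.
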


\begin{proof}

Immediate from Proposition~\ref{prop:zinfo-identity}.

\end{proof}

Thus, PAC-Bayes Z-information exactly characterizes the gap between classical
configuration-space complexity and behavior selection complexity. Equivalently, it
measures the portion of classical PAC-Bayes complexity attributable to
variation within behavioral fibers.

\subsection{Behavior-Level Bound Expressed Using Z-Information}

Combining Theorem~\ref{thm:behavior-pacbayes} with
Proposition~\ref{prop:zinfo-identity} yields the following equivalent form.

\begin{corollary}[Behavior-space formulation of the PAC-Bayes bound with Z-Information]
\label{cor:zinfo-pacbayes}

With probability at least $1-\delta$ over the draw of $S$,
simultaneously for all posteriors $Q$,
\[
\mathrm{KL}
\!\left(
\hat L_S(Q)
\,\middle\|\,
L(Q)
\right)
\le
\frac1n
\left(
\mathrm{KL}(Q\|P)
+
\mathcal Z_{\mathrm{PB}}(Q\|P)
+
\log\frac{2\sqrt n}{\delta}
\right).
\]
\end{corollary}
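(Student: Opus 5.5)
The corollary follows directly from Theorem~\ref{thm:behavior-pacbayes} and Proposition~\ref{prop:zinfo-identity}, and the plan has two steps.

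\textbf{Step 1.} Fix $\delta$. Let $E$ be the event of probability at least $1-\delta$ on which the conclusion of Theorem~\ref{thm:behavior-pacbayes} holds simultaneously for all posteriors $Q$. This event is defined using only the prior $\pi_P$ on $\mathcal K$, not any particular $Q$, so the uniformity over $Q$ carries over unchanged.

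\textbf{Step 2.} On $E$, fix an arbitrary posterior $Q$. I split into two cases.

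\emph{Case $\mathrm{KL}(Q\|P)<\infty$.} Proposition~\ref{prop:kl-decomposition} applies. Through Proposition~\ref{prop:zinfo-identity} it gives the identity
\[
\mathrm{KL}(\pi_Q\|\pi_P)=\mathrm{KL}(Q\|P)+\mathcal Z_{\mathrm{PB}}(Q\|P),
\]
in which all terms are finite. In particular $0\le -\mathcal Z_{\mathrm{PB}}(Q\|P)\le \mathrm{KL}(Q\|P)$, so the sum is well defined. Substituting this identity into the right-hand side of Theorem~\ref{thm:behavior-pacbayes} yields the stated inequality verbatim.

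\emph{Case $\mathrm{KL}(Q\|P)=\infty$.} Here the right-hand side is of the form $\infty+\mathcal Z_{\mathrm{PB}}$. I would adopt the extended-real convention mentioned before Proposition~\ref{prop:kl-decomposition}. The simplest route is to restrict the corollary to posteriors with finite KL, where the classical bound is nonvacuous anyway. Otherwise one interprets the right-hand side as $+\infty$ and the inequality holds trivially.

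\textbf{Main obstacle.} There is no real analytic difficulty. The only care needed is this bookkeeping of the infinite case, where $\mathcal Z_{\mathrm{PB}}$ could be $-\infty$, and making sure the high-probability event is not chosen after $Q$. Both are handled by the two steps above.
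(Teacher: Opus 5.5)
Your proposal is correct and follows the paper's proof: substitute the identity $\mathrm{KL}(\pi_Q\|\pi_P)=\mathrm{KL}(Q\|P)+\mathcal Z_{\mathrm{PB}}(Q\|P)$ from Proposition~\ref{prop:zinfo-identity} into the bound of Theorem~\ref{thm:behavior-pacbayes}, on the single high-probability event that theorem provides. Your separate treatment of the case $\mathrm{KL}(Q\|P)=\infty$ is care that the paper's proof omits; there the identity is not established and $\infty+\mathcal Z_{\mathrm{PB}}$ may be undefined, and restricting to finite-KL posteriors is the cleanest way to handle it.
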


Because $\mathcal Z_{\mathrm{PB}}(Q\|P)\le0$, the complexity term in
Corollary~\ref{cor:zinfo-pacbayes} is never larger than the classical
PAC-Bayes complexity term.

Corollary~\ref{cor:zinfo-pacbayes} does not introduce a new concentration
inequality. Rather, it rewrites the behavior selection complexity term in a form
that makes the realization-level contribution explicit.

\subsection{Interpretation}

The results above separate two distinct contributions to PAC-Bayes
complexity:

\begin{enumerate}
\item uncertainty over predictive behaviors;
\item variation among realizations that implement a fixed behavior.
\end{enumerate}

The first contribution is captured by
$\mathrm{KL}(\pi_Q\|\pi_P)$ and is the only component relevant to predictive
risk. The second is captured by
$-\mathcal Z_{\mathrm{PB}}(Q\|P)$ and reflects the redistribution of posterior
mass within behavioral fibers.

Theorem~\ref{thm:optimal-representative} showed that
$\mathrm{KL}(\pi_Q\|\pi_P)$ is the minimum classical PAC-Bayes complexity
among all posteriors that induce the same distribution over predictive
behaviors. Consequently, the behavior-level bound can be viewed as the
PAC-Bayes bound obtained after removing complexity that is irrelevant to
prediction while preserving the induced behavior distribution.

Consequently, PAC-Bayes Z-information quantifies the extent to which
classical PAC-Bayes complexity arises from realization multiplicity rather
than uncertainty about predictive behavior itself.

In models with substantial symmetry or over-parameterization, this
realization-level contribution can be significant. When posterior and prior
induce identical conditional distributions within each fiber,
$\mathcal Z_{\mathrm{PB}}(Q\|P)=0$, and classical PAC-Bayes complexity
coincides exactly with behavior selection complexity.

\subsection{Summary}

Because predictive risk depends only on the induced distribution over
behaviors, PAC-Bayes analysis can be formulated directly on the behavior space
associated with the map $\beta:\Theta\rightarrow\mathcal K$. The resulting
complexity term, $\mathrm{KL}(\pi_Q\|\pi_P)$, depends only on uncertainty over
predictive behavior and is exactly the minimum classical PAC-Bayes complexity
among all posteriors inducing the same behavior distribution.

PAC-Bayes Z-information quantifies the gap between configuration-space and
behavior selection complexity. Equivalently, it measures the portion of classical
PAC-Bayes complexity attributable to variation within behavioral fibers. The
Behavior-space formulation of the PAC-Bayes bound is therefore never looser than the corresponding
classical PAC-Bayes bound and coincides with it precisely when the posterior and
prior induce identical conditional distributions within each fiber.

\section{Realization Multiplicity, Symmetry, and Flat Minima}
\label{sec:geometry}

Sections~\ref{sec:setup}--\ref{sec:pacbayes} develop the
behavior--realization decomposition in a fully measure-theoretic setting. The
purpose of this section is not to strengthen those results, but to provide
geometric intuition for realization multiplicity in continuous configuration
spaces.

Accordingly, the discussion below should be viewed as an interpretation of the
measure-theoretic framework rather than a prerequisite for it. Geometric
assumptions are introduced only where needed to discuss reference measures,
tangent directions, and related notions.

Throughout this section, we use the terms \emph{fiber} and
\emph{behavioral equivalence class} interchangeably.

\subsection{Fibers as Geometric Objects}

Let $\Theta\subseteq\mathbb R^d$ be a continuous configuration space, and let
$\beta:\Theta\rightarrow\mathcal K$ denote the behavior map introduced in
Section~\ref{sec:setup}.

For a behavior $k\in\mathcal K$, the corresponding fiber is
\[
F_k=\beta^{-1}(k).
\]

In general, fibers need not possess a smooth manifold structure. The geometric
discussion below is intended only to provide intuition for settings in which
fibers admit sufficient regularity for notions such as reference measures and
tangent directions to be defined.

A fiber contains all configurations that induce the same predictive behavior.
Consequently, moving within a fiber changes the realization of a predictor
without changing the predictor itself.

Behavioral equivalence therefore decomposes parameter-space variation into
(i) variation across fibers, which changes predictive behavior, and
(ii) variation within fibers, which changes only the realization of that
behavior. This geometric distinction mirrors the behavior
–realization decomposition developed in
Sections~\ref{sec:zinfo} and~\ref{sec:pacbayes}.

\subsection{Symmetry and Realization Multiplicity}

Many over-parameterized models admit symmetries that preserve predictive
behavior. Neural networks provide a familiar example.

Consider a feedforward network with two hidden units. Simultaneously
permuting the incoming and outgoing weights of those units leaves the computed
function unchanged, even though the parameter vector changes.
The resulting parameter configurations therefore belong to the same fiber.

More generally, permutation symmetries, scaling symmetries, redundant hidden
units, and other forms of non-identifiability can produce multiple
configurations that realize the same predictive behavior
\citep{neyshabur2017exploring,dinh2017sharp}.

These symmetries generate realization multiplicity: many distinct points in
parameter space correspond to a single predictive behavior.

\begin{proposition}[Symmetry-Induced Multiplicity]
\label{prop:symmetry-zinfo}

Let $F_k$ be a finite symmetry orbit of size $m$, and suppose
$P(\cdot\mid k)$ is uniform on that orbit. Then
\[
0
\le
\mathrm{KL}
\!\left(
Q(\cdot\mid k)
\,\middle\|\,
P(\cdot\mid k)
\right)
\le
\log m.
\]
\end{proposition}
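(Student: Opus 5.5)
The plan is to reduce everything to a finite computation on the orbit. Since $F_k=\{\theta_1,\dots,\theta_m\}$ is finite and $P(\cdot\mid k)$ is uniform on it, $P(\{\theta_i\}\mid k)=1/m$ for each $i$. The first step is to observe that $Q(\cdot\mid k)$, being a disintegration component, is concentrated on the fiber $F_k$ for $\pi_Q$-almost every $k$. I therefore treat it as a probability vector $q=(q_1,\dots,q_m)$ on $F_k$. Absolute continuity $Q(\cdot\mid k)\ll P(\cdot\mid k)$ is then automatic, because $P(\cdot\mid k)$ charges every point of $F_k$.

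With this in hand the divergence is an explicit finite sum:
\[
\mathrm{KL}\!\left(Q(\cdot\mid k)\,\middle\|\,P(\cdot\mid k)\right)
=\sum_{i=1}^m q_i\log\frac{q_i}{1/m}
=\log m-H(q),
\]
where $H(q)=-\sum_i q_i\log q_i$ is the Shannon entropy of $q$, with the convention $0\log 0=0$.

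The lower bound $0$ follows from Gibbs' inequality, that is, nonnegativity of relative entropy (equivalently, $H(q)\le\log m$ by Jensen's inequality applied to the concave logarithm). The upper bound $\log m$ follows from $H(q)\ge 0$, which holds because each $-q_i\log q_i\ge 0$ for $q_i\in[0,1]$. The upper bound is attained exactly when $q$ is a point mass on a single realization, which is the idealized regime behind the $\log(h!)$ illustration. The lower bound is attained exactly when $q$ is uniform, i.e. $Q(\cdot\mid k)=P(\cdot\mid k)$.

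The only delicate step is the first one, namely justifying that the conditional $Q(\cdot\mid k)$ lives on $F_k$. This is a property of disintegrations with respect to $\beta$ on standard Borel spaces: the conditional measures satisfy $Q(\beta^{-1}(\{k\})\mid k)=1$ for $\pi_Q$-a.e. $k$, which is the version of the disintegration theorem assumed in the setup. Accordingly, the bound should be read as holding for $\pi_Q$-a.e. $k$, or for every $k$ once versions are chosen supported on their fibers. Everything after that step is elementary.
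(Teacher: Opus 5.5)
Your proposal is correct and follows essentially the same route as the paper's proof in Appendix~\ref{app:proof-symmetry-zinfo}. Both write the divergence as the finite sum $\sum_{\theta\in F_k} Q(\theta\mid k)\log\bigl(m\,Q(\theta\mid k)\bigr)$ against the uniform conditional prior, use nonnegativity of KL for the lower bound, and identify the point-mass case as giving $\log m$; your rewriting as $\log m - H(q)$ with $H(q)\ge 0$ supplies the justification that the maximum occurs at a point mass, which the paper only asserts, and your remark that $Q(\cdot\mid k)$ must be supported on $F_k$ for $\pi_Q$-a.e.\ $k$ makes explicit an assumption the paper's sum over $F_k$ uses implicitly.
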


\begin{proof}

Nonnegativity follows from the basic properties of KL divergence.
The maximum occurs when $Q(\cdot\mid k)$ concentrates on a single realization,
in which case the divergence from the uniform distribution equals $\log m$.

\end{proof}

The proposition shows that increasing the size of a symmetry orbit increases
the maximum possible realization-level contribution to the classical
PAC-Bayes KL divergence while leaving predictive behavior unchanged.
Consequently, realization multiplicity contributes directly to the within-fiber
term identified by PAC-Bayes Z-information (see
Appendix~\ref{app:proof-symmetry-zinfo} for the complete proof).

\subsection{Reference Measures and Continuous Realization Multiplicity}

In continuous models, realization multiplicity is often expressed not through
finite symmetry orbits but through extended subsets of parameter space that
implement the same predictive behavior.

To make this idea precise, let $\mu_k$ denote a reference measure on a fiber
$F_k$. When fibers admit additional geometric structure, $\mu_k$ may be chosen
to reflect that structure (for example, as a Hausdorff measure on a smooth
submanifold representation of the fiber). The quantity
\[
\mu_k(F_k)
\]
measures the size of the realization set associated with behavior $k$
relative to the chosen reference measure.

The numerical value depends on the chosen reference measure, but the
underlying interpretation is invariant: some predictive behaviors admit many
realizations, whereas others admit comparatively few. Accordingly,
realization multiplicity in continuous spaces should be understood relative to
a specified reference measure rather than as an intrinsic geometric volume.

A fiber with a larger reference-measure size corresponds, relative to the
chosen reference measure, to a behavior that can be realized by a larger set
of configurations. Conversely, a fiber with a smaller reference-measure size
corresponds to a behavior supported by a more restricted set of
configurations.

From this perspective, realization multiplicity may be viewed geometrically as
the reference-measure size of the set of configurations associated with a fixed
predictive behavior. This geometric notion is the continuous-space analogue of
the finite symmetry-orbit multiplicity described in
Proposition~\ref{prop:symmetry-zinfo}.

This intuition is reflected formally in the within-fiber KL divergence of
Section~\ref{sec:zinfo}, which compares posterior concentration with the
reference distribution supplied by the conditional prior. In particular,
PAC-Bayes Z-information is defined entirely through conditional relative
entropy and therefore does not require any canonical notion of geometric
volume. The reference measure serves only as geometric intuition for
understanding realization multiplicity in continuous spaces.

\subsection{Fiber Directions and Flatness}
\label{sec:flatness-zinfo}

The geometric consequences of realization multiplicity are closely related to,
but conceptually distinct from, the notion of flat minima
\citep{hochreiter1997flat,keskar2017sharp}.

A key distinction should be emphasized. Behavior-preserving directions are
directions along which predictive behavior remains unchanged. Flat directions
are directions along which the loss changes little or not at all. Every
behavior-preserving direction is necessarily first-order flat with respect to
any loss that depends only on predictive behavior, although the converse need
not hold.

Let $\mathcal L:\Theta\to\mathbb R$ denote a differentiable objective that
depends on $\theta$ only through the induced predictive behavior (for
example, population risk under the assumptions of
Section~\ref{sec:setup}).

\begin{proposition}[Behavior-Preserving Directions are Flat]
\label{prop:tangent-flat}

Let $v$ be tangent to a fiber $F_k$ at a point $\theta$. Assume that there
exists a differentiable curve
$\gamma:(-\varepsilon,\varepsilon)\to F_k$
with $\gamma(0)=\theta$ and $\gamma'(0)=v$, and that
$\mathcal L$ is differentiable. Then
\[
\nabla\mathcal L(\theta)^\top v=0.
\]
\end{proposition}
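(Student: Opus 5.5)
The plan is to restrict the objective to the curve $\gamma$ and differentiate at $t=0$. The key structural input is that $\mathcal L$ depends on $\theta$ only through $\beta(\theta)$. This is exactly the factorization established after Lemma~\ref{lem:loss-invariance}: there is a function $\mathcal L_{\mathcal K}$ on $\mathcal K$ with $\mathcal L(\theta)=\mathcal L_{\mathcal K}(\beta(\theta))$. Differentiability of $\mathcal L_{\mathcal K}$ is neither needed nor assumed.

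First, define $g:(-\varepsilon,\varepsilon)\to\mathbb R$ by $g(t)=\mathcal L(\gamma(t))$. Since $\gamma(t)\in F_k$ for every $t$, we have $\beta(\gamma(t))=k$. Hence $g(t)=\mathcal L_{\mathcal K}(k)$, so $g$ is constant on $(-\varepsilon,\varepsilon)$ and $g'(0)=0$.

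Second, compute $g'(0)$ another way. The map $\gamma$ is differentiable at $0$ with $\gamma'(0)=v$, and $\mathcal L$ is (Fréchet) differentiable at $\theta=\gamma(0)$. The ordinary chain rule in $\mathbb R^d$ then gives
\[
g'(0)=\nabla\mathcal L(\theta)^\top\gamma'(0)=\nabla\mathcal L(\theta)^\top v .
\]
Comparing the two expressions for $g'(0)$ yields $\nabla\mathcal L(\theta)^\top v=0$.

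The only point needing care is the chain-rule step. It requires genuine (total) differentiability of $\mathcal L$ at $\theta$, not mere directional or Gateaux differentiability. Total differentiability is what the hypothesis ``$\mathcal L$ is differentiable'' supplies. One should also note that $\theta$ lies in the interior of $\Theta$, or that $\mathcal L$ is differentiable on an open set containing $\gamma$, so that the gradient is defined; I would state this as implicit in the assumption.

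No smooth manifold structure on $F_k$ is used. Only the existence of the single curve $\gamma$ lying inside the fiber is needed, which matches the minimal-regularity spirit of Section~\ref{sec:geometry}.
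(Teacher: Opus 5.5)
Your proposal is correct and follows the paper's proof: $\mathcal L\circ\gamma$ is constant because $\gamma$ stays in $F_k$, and the chain rule at $t=0$ then gives $\nabla\mathcal L(\theta)^\top v=0$. Your remarks on total differentiability and on $\mathcal L$ factoring through $\beta$ make explicit what the paper leaves implicit. One small correction: that factorization comes directly from the standing hypothesis that $\mathcal L$ depends only on predictive behavior, not from the factorization of $L,\hat L_S$ stated after Lemma~\ref{lem:loss-invariance}.
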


\begin{proof}

Let
$\gamma:(-\varepsilon,\varepsilon)\to F_k$
be the curve specified in the assumptions.
Since every point on $\gamma$ belongs to the same fiber,
the induced predictive behavior is constant along the curve.
Because $\mathcal L$ depends only on predictive behavior,
$\mathcal L(\gamma(t))$ is constant in $t$.

Differentiating at $t=0$ and applying the chain rule gives

\[
0
=
\frac{d}{dt}\mathcal L(\gamma(t))
\Big|_{t=0}
=
\nabla\mathcal L(\theta)^\top \gamma'(0)
=
\nabla\mathcal L(\theta)^\top v.
\]

\end{proof}

This proposition establishes a precise connection between behavioral
equivalence and flatness. Behavior-preserving directions correspond to exact
invariances of the predictive mapping and are therefore necessarily
first-order flat for any behavior-dependent objective. They provide an
idealized geometric model of realization multiplicity.

This result is intentionally weaker than the flat-minimum analyses commonly
used in deep learning
\citep{hochreiter1997flat,keskar2017sharp,jiang2020fantastic}. It establishes
only first-order invariance along behavioral fibers and does not require
assumptions about optimization trajectories, Hessian spectra, or local
curvature.

In practical models, approximate symmetries may produce nearly flat
directions even when exact behavioral invariance does not hold globally.
Conversely, a direction may be locally flat without corresponding to an exact
behavioral invariance. Thus, realization multiplicity and flatness are
closely related but not identical concepts.

\subsection{Geometric Interpretation of PAC-Bayes Z-Information}

The geometric structures discussed above influence PAC-Bayes analysis only
through their contribution to the realization-level term
\[
-\mathcal Z_{\mathrm{PB}}(Q\|P).
\]
Accordingly, PAC-Bayes Z-information admits a natural geometric
interpretation.

\begin{itemize}
\vspace*{-3pt}
\item In discrete settings, it measures posterior concentration relative to
finite symmetry orbits.
\vspace*{-3pt}
\item In continuous settings, it measures posterior concentration relative to
the conditional prior within behavior-preserving regions of parameter space.
\vspace*{-3pt}
\item In over-parameterized models, it quantifies the information-theoretic
consequences of realization multiplicity arising from symmetry, redundancy,
and behavior-preserving directions.
\end{itemize}

When the posterior and prior distribute mass similarly within behavioral
fibers,
$-\mathcal Z_{\mathrm{PB}}(Q\|P)$ is small.
When the posterior concentrates on a relatively small subset of realizations
relative to the conditional prior, the within-fiber divergence grows.
Geometrically, this corresponds to concentrating probability mass on a smaller
region of the realization set than that favored by the conditional prior.

Thus, PAC-Bayes Z-information does not measure uncertainty over predictive
behavior. Rather, it measures posterior concentration within behavioral
fibers relative to the conditional prior and thereby quantifies the
realization-level contribution to classical PAC-Bayes complexity.

\subsection{Summary}

Behavioral equivalence induces a geometric structure on parameter space in
which multiple realizations correspond to a single predictive behavior.

In discrete settings, realization multiplicity appears through symmetry
orbits. In continuous settings, it appears through the reference-measure size
of behavior-preserving regions and the existence of directions tangent to
behavioral fibers.

PAC-Bayes Z-information provides the information-theoretic counterpart of this
geometry by quantifying posterior concentration relative to the realization
structure encoded by behavioral fibers.

From this perspective, PAC-Bayes Z-information provides the
information-theoretic characterization of realization multiplicity, while the
geometry of behavioral fibers provides its structural interpretation. The
variational characterization of
Section~\ref{sec:pacbayes} shows that behavior-selection complexity is obtained
by removing within-fiber variation while preserving predictive behavior. The
geometric picture developed here clarifies what that removed variation
represents: symmetry-related realizations, behavior-preserving directions, and
large realization sets in continuous parameter spaces.

\section{Stability and Invariance from Realization Multiplicity}
\label{sec:stability}

The previous sections showed that behavioral equivalence separates predictive
behavior from internal realization. This section examines the consequences of
that separation for perturbations that change realizations while leaving
behavior unchanged. The resulting invariance properties clarify how realization
multiplicity manifests independently of both optimization dynamics and
training-set perturbations.

The notion considered here differs from classical algorithmic stability.
Algorithmic stability concerns the sensitivity of learned predictors to
perturbations of the training data
\citep{bousquet2002stability}. By contrast, we consider perturbations that
modify an internal realization while preserving predictive behavior.

\subsection{Fiber-Preserving Perturbations}

Let $F_k=\beta^{-1}(k)$ denote the fiber corresponding to behavior
$k\in\mathcal K$. We consider perturbations
$\theta\mapsto\theta'$ such that $\theta,\theta'\in F_k$.
Such perturbations move within a behavioral equivalence class and therefore
preserve predictive behavior by construction.

In over-parameterized models, these perturbations may arise from parameter
symmetries, redundant representations, or other forms of non-identifiability
\citep{neyshabur2017exploring,dinh2017sharp}. For example, permuting hidden
units together with their associated outgoing weights changes the parameter
vector but leaves the computed function unchanged. The resulting
configurations, therefore, remain in the same fiber.

\subsection{Loss Invariance on Fibers}

Because empirical and population risks depend only on predictive behavior,
they are invariant under fiber-preserving perturbations.

\begin{proposition}[Loss Invariance on Fibers]
\label{prop:loss-invariance}

For any behavior $k$ and any $\theta,\theta'\in F_k$,
\[
L(\theta)=L(\theta'),
\qquad
\hat L_S(\theta)=\hat L_S(\theta').
\]
\end{proposition}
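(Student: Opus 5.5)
The plan is to reduce the statement to Lemma~\ref{lem:loss-invariance}, since membership in a common fiber is exactly behavioral equivalence. Given a behavior $k\in\mathcal K$ and $\theta,\theta'\in F_k$, the definition $F_k=\beta^{-1}(k)$ gives $\beta(\theta)=k=\beta(\theta')$. By the Behavioral Equivalence definition this means $\theta\sim\theta'$.

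Next I would invoke Lemma~\ref{lem:loss-invariance} directly, which yields $L(\theta)=L(\theta')$ and $\hat L_S(\theta)=\hat L_S(\theta')$. Alternatively, the factorization recorded after that lemma gives the same conclusion in one line:
\[
L(\theta)=L_{\mathcal K}(\beta(\theta))=L_{\mathcal K}(k)=L_{\mathcal K}(\beta(\theta'))=L(\theta'),
\]
and the identical chain with $\hat L_{S,\mathcal K}$ handles the empirical risk for every sample $S$.

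There is essentially no obstacle. The only point needing care is that the underlying lemma uses the fact that $\beta(\theta)$ determines $P_\theta(\cdot\mid x)$ for every $x$, so that the pointwise losses coincide. This holds when $\beta$ records the conditional predictive distribution, as assumed in Section~\ref{sec:setup}. If equivalence is taken modulo $\mathcal D_{\mathcal X}$-null sets, the population-risk equality still follows because the integrands agree almost everywhere. The empirical-risk claim then requires the sample points to avoid the exceptional null set, which happens almost surely. I would add a brief remark to that effect rather than alter the main argument.
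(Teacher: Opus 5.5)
Your proof is correct and matches the paper's, which likewise observes that $\theta,\theta'\in F_k$ gives $\beta(\theta)=\beta(\theta')=k$ and concludes immediately from Lemma~\ref{lem:loss-invariance}; the appendix version instead cites the factorization of the risks through $\beta$ from Proposition~\ref{prop:sufficiency}, which is your one-line alternative. Your remark on the almost-everywhere variant is a fair caveat that the paper glosses over, but the ``almost surely'' conclusion for the empirical risk holds for each fixed pair $(\theta,\theta')$ and need not hold simultaneously for all pairs in a fiber.
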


\begin{proof}
Immediate from Lemma~\ref{lem:loss-invariance}.
\end{proof}

Thus, all configurations belonging to the same fiber have identical empirical
and population risks.

\subsection{Invariance Within Fibers}

Proposition~\ref{prop:loss-invariance} shows that perturbations that remain
within a fiber leave predictive behavior and risk unchanged. This invariance
is a structural consequence of behavioral equivalence.

Unlike algorithmic stability, it is not a property of a learning algorithm.
Rather, it is a property of the configuration space together with the behavior
map. It reflects the existence of multiple internal realizations that
implement the same predictive behavior.

Section~\ref{sec:geometry} showed that, under suitable regularity conditions,
directions tangent to fibers are first-order flat for any behavior-dependent
objective. The invariance described here may, therefore, be viewed as the
finite-perturbation counterpart of the infinitesimal invariances associated
with behavioral equivalence.

\subsection{Connection to PAC-Bayes Z-Information}

Section~\ref{sec:pacbayes} showed that PAC-Bayes complexity decomposes into a
behavior-selection term and a realization-level term. The realization-level term is

\[
-\mathcal Z_{\mathrm{PB}}(Q\|P)
=
\mathbb E_{k\sim\pi_Q}
\Big[
\mathrm{KL}
\big(
Q(\cdot\mid k)
\,\|\,P(\cdot\mid k)
\big)
\Big].
\]

This quantity measures how strongly the posterior concentrates within fibers
relative to the conditional prior. Since PAC-Bayes Z-information is the
negative expected within fiber KL divergence, it vanishes precisely when
the posterior and prior induce the same conditional distributions within fibers.
More generally, increasingly negative values of
$\mathcal Z_{\mathrm{PB}}(Q\|P)$ correspond to greater posterior
concentration relative to the conditional prior within fibers.

Importantly, PAC-Bayes Z-information is defined entirely through conditional
relative entropy and, therefore, does not depend on any geometric structure of
the fibers. The geometric interpretation developed in
Section~\ref{sec:geometry} provides intuition, but the quantity itself is
measure-theoretic.

As discussed in ~\cref{sec:flatness-zinfo}, it is important not to conflate this quantity with geometric flatness.
PAC-Bayes Z-information measures a distributional property: posterior
concentration relative to a reference distribution within fibers. Flatness, by
contrast, is a geometric property of the parameterization. Both arise from the
same realization structure, but they quantify different aspects of it.

\subsection{Interpretation}

Behavioral equivalence implies that many distinct configurations may realize
the same predictive behavior. This realization multiplicity has three
consequences developed throughout the paper.

First, it induces a decomposition of PAC-Bayes complexity into behavior-level
and realization-level terms. Second, in continuous parameter spaces and
under appropriate regularity assumptions, it gives rise to
behavior-preserving directions that are first-order flat for
behavior-dependent objectives. Third, it yields invariance of predictive
behavior and risk under fiber-preserving perturbations.

These observations suggest a useful conceptual distinction between
behavior selection complexity and realization multiplicity. Highly
over-parameterized models may exhibit substantial realization multiplicity
without a corresponding increase in behavior selection complexity. PAC-Bayes
Z-information isolates the contribution of this realization-level variation to
classical PAC-Bayes complexity, while the geometric and invariance viewpoints
provide complementary interpretations of the same underlying structure.

\subsection{Relation to Classical Stability}

Classical algorithmic stability analyzes how learned predictors change when
the training data are perturbed
\citep{bousquet2002stability}. The notion studied here concerns a different
source of variation:

\begin{itemize}
\vspace*{-6pt}
\item \textbf{Algorithmic stability:}
the sensitivity of the learning outcome to perturbations of the training set.
\vspace*{-16pt}
\item \textbf{Behavioral invariance:}
the invariance of predictive behavior under perturbations that remain within a
behavioral equivalence class.
\end{itemize}

The two notions address different aspects of learning systems and should be
viewed as complementary rather than competing perspectives.

\subsection{Summary}

Behavioral equivalence induces invariance under realization-preserving
perturbations. This invariance is distinct from classical algorithmic stability
and arises from realization multiplicity rather than insensitivity to
training data.

PAC-Bayes Z-information quantifies how the posterior mass is distributed within
behavioral equivalence classes and therefore provides an information-theoretic
measure of realization-level variation. Together with the geometric
perspective of Section~\ref{sec:geometry}, these results show that invariance,
fiber geometry, and realization-level PAC-Bayes complexity are complementary
manifestations of the same underlying realization structure induced by
behavioral equivalence.

\section{Related Work}
\label{sec:related}

\vspace*{-2pt}\paragraph{PAC-Bayes theory and complexity.}
PAC-Bayes theory provides distribution-dependent generalization bounds through
the tradeoff between empirical risk and the KL divergence between posterior and
prior distributions over hypotheses
\citep{mcallester1999pac,seeger2002pac,catoni2007pac}. Extensive work has
refined these bounds through localization, data-dependent priors, and improved
complexity control
\citep{kuzborskij2024better,jang2023tighter,rivasplata2020pacbayes,viallard2024}.
Connections with minimum description length (MDL) have likewise been widely
studied, with KL divergence admitting a coding-theoretic interpretation as
excess description length
\citep{rissanen1978model,grunwald2007minimum}.

PAC-Bayes theory is not restricted to parameter spaces. The underlying results
apply to arbitrary measurable hypothesis spaces, and several influential
analyses have been formulated directly in function space
\citep{seeger2002pac}. A major milestone was the demonstration of non-vacuous
PAC-Bayes bounds for deep neural networks by
\citet{dziugaite2017computing}, which stimulated extensive work on the
interpretation of PAC-Bayes complexity in over-parameterized models.

\vspace*{-4pt}\paragraph{Function-space, symmetry, and invariance perspectives.}
Several lines of work have emphasized that learning depends more directly on
induced predictors than on particular parameterizations. Examples include
Bayesian neural-network limits, Gaussian-process formulations, and neural
tangent kernel analyses
\citep{neal1995bayesian,lee2018deep,matthews2018gaussian,jacot2018ntk,
sun2019fvnns,dziugaite2021role}.

Related ideas arise in the study of invariance and symmetry. Neural networks
exhibit substantial parameter redundancies arising from permutations,
reparameterizations, and other transformations that preserve predictive
behavior. Exploiting such structure can reduce effective complexity and improve
generalization bounds
\citep{lyle2020benefits,behboodi2022pac}. More broadly, symmetry-aware and
invariance-aware learning can often be viewed as replacing a large
representation space with a smaller space of behaviorally distinguishable
predictors.

\vspace*{-4pt}\paragraph{Over-parameterization and flatness.}
The relationship between over-parameterization and generalization has been
widely studied through flat minima, sharpness, and loss geometry
\citep{hochreiter1997flat,keskar2017sharp}. Subsequent work demonstrated that
sharpness measures can be sensitive to parameterization and scaling
\citep{dinh2017sharp,jiang2020fantastic}. Parameter symmetries and
non-identifiability have likewise been studied as sources of equivalent
realizations
\citep{neyshabur2017exploring}. Our framework provides a complementary
perspective in which flatness is interpreted as one geometric manifestation of
realization multiplicity induced by behavioral equivalence.

\vspace*{-4pt}\paragraph{Information-theoretic perspectives.}
Shannon entropy and KL divergence quantify uncertainty over observable outcomes
and discrepancies between predictive distributions
\citep{shannon1948,shannon1951prediction,cover2006elements}, but do not
explicitly distinguish uncertainty over predictive behavior from multiplicity
among realizations that induce identical behavior. The terminology introduced
here is inspired by the microstate--macrostate distinction in statistical
physics and is conceptually related to recent work on zentropy
\citep{liu2022zentropy,liu2024zentropy}, although our development is entirely
learning-theoretic and PAC-Bayes in nature.

\vspace*{-4pt}\paragraph{Positioning of this work.}
The distinguishing feature of the present framework is an exact fiberwise
decomposition of the classical PAC-Bayes KL divergence induced by an arbitrary
measurable behavior map
$
\beta:\Theta\rightarrow\mathcal K.
$
The resulting decomposition separates uncertainty over predictive behavior
from variation among behaviorally equivalent realizations, identifies
PAC-Bayes Z-information as the realization-level contribution to classical
PAC-Bayes complexity, and yields a behavior-selection complexity term that is
exactly the minimum classical PAC-Bayes complexity among all posteriors
inducing the same distribution over predictive behaviors.

Viewed through this lens, prior work on function-space PAC-Bayes analysis,
invariance-aware learning, and symmetry-aware complexity control can be seen as
instances of a broader principle: learning complexity should depend on
distinguishable predictive behavior rather than arbitrary redundancy in its
internal realization. 

Existing approaches typically exploit particular sources of redundancy,
including group symmetries, parameter-space invariances, or specific
reparameterizations
\citep{lyle2020benefits,behboodi2022pac,
neyshabur2017exploring,dinh2017sharp}.
The present framework instead begins with an arbitrary measurable notion of
behavioral equivalence, from which the behavior–realization decomposition,
variational characterization, and behavior-level complexity arise uniformly.
\section{Summary and Discussion}
\label{sec:conclusion}

\subsection{Summary}

This paper showed that behavioral equivalence induces an exact decomposition
of the classical PAC-Bayes KL divergence into behavior-selection and
realization-level components. Within this decomposition, we introduced
PAC-Bayes Z-information as the negative expected conditional KL divergence,
thereby identifying the realization-level contribution to classical
PAC-Bayes complexity.

The key observation is that predictive risk depends only on induced predictive
behavior, whereas classical PAC-Bayes complexity is defined on the full
configuration space. By formalizing behavioral equivalence through a measurable
behavior map and applying measure disintegration, we established the identity
\[
\mathrm{KL}(Q\|P)
=
\mathrm{KL}(\pi_Q\|\pi_P)
-
\mathcal Z_{\mathrm{PB}}(Q\|P),
\]
where
\[
\mathcal Z_{\mathrm{PB}}(Q\|P)
=
-
\mathbb E_{k\sim\pi_Q}
\Big[
\mathrm{KL}
\big(
Q(\cdot\mid k)
\,\|\,P(\cdot\mid k)
\big)
\Big].
\]

This identity reveals that classical PAC-Bayes complexity consists of two
conceptually distinct contributions: uncertainty over predictive behavior and
variation among behaviorally equivalent realizations. PAC-Bayes Z-information
is precisely the negative expected within-fiber KL divergence and therefore
quantifies the realization-level contribution exactly.

The decomposition identifies the behavior-selection component,
$\mathrm{KL}(\pi_Q\|\pi_P)$, as the complexity associated with predictive
behavior itself. We further showed that this quantity admits an exact
variational characterization: it is the minimum classical PAC-Bayes complexity
among all posteriors inducing the same distribution over predictive behaviors.

Since
\[
\mathrm{KL}(\pi_Q\|\pi_P)
\le
\mathrm{KL}(Q\|P),
\]
the behavior-selection complexity is never larger than the classical
configuration-space complexity, and the gap is quantified exactly by
$-\mathcal Z_{\mathrm{PB}}(Q\|P)$.

Finally, Sections~\ref{sec:geometry} and~\ref{sec:stability} showed that the
same behavioral-equivalence structure also gives rise to geometric and
invariance-based interpretations through symmetry, realization multiplicity,
behavior-preserving directions, and fiber-preserving perturbations. Together,
these results provide complementary information-theoretic, geometric, and
structural perspectives on the role of realization multiplicity in classical
PAC-Bayes complexity.

\subsection{Discussion}

The central contribution of this work is the identification of an exact structural decomposition of PAC-Bayes complexity induced by behavioral equivalence. Although the underlying relative-entropy identity follows from the classical chain rule under disintegration, its interpretation through behavioral equivalence reveals a distinction between behavior-selection complexity and realization-level complexity that is not explicit in existing PAC-Bayes analyses. The novelty of the present framework lies in recognizing that,
when the measurable map is taken to be predictive behavior, the resulting
terms admit a natural learning-theoretic interpretation. Behavioral
equivalence reveals that the classical PAC-Bayes KL divergence naturally
separates into two conceptually distinct components:
(i) uncertainty over predictive behavior and
(ii) variation among realizations that implement the same behavior.

Only the first component directly affects predictive risk. The second reflects
how posterior mass is distributed among behaviorally equivalent realizations
and therefore contributes to configuration-space complexity without altering
prediction. The behavior--realization decomposition makes this distinction
explicit, identifies PAC-Bayes Z-information as an exact measure of the
realization-level contribution, and reveals behavior-selection complexity as
the irreducible component of classical PAC-Bayes complexity associated with
predictive behavior.

Viewed through this lens, the behavior-level PAC-Bayes formulation should not
be interpreted as an alternative to classical PAC-Bayes analysis, but as its
canonical representation on the behavior space. The corresponding complexity
term, $\mathrm{KL}(\pi_Q\|\pi_P)$, is precisely the minimum classical
PAC-Bayes complexity among all posteriors inducing the same distribution over
predictive behaviors. PAC-Bayes Z-information then quantifies exactly how much
additional complexity arises solely from redistributing posterior mass among
behaviorally equivalent realizations.

This perspective provides a principled way to distinguish complexity that is
intrinsic to predictive behavior from complexity that reflects only the choice
of realization. As a result, it offers a unified lens for interpreting
PAC-Bayes analyses of over-parameterized models, comparing alternative
parameterizations of the same predictor, and motivating future complexity
measures that operate directly on predictive behavior rather than internal
realization.

Beyond its implications for PAC-Bayes analysis, the framework provides a common
language for several phenomena that are often studied separately, including
parameter symmetries, realization multiplicity, behavior-preserving
perturbations, and geometric flatness. Rather than introducing these concepts
independently, behavioral equivalence reveals them as complementary
manifestations of the same underlying realization structure.

Several limitations should also be emphasized. First, the present analysis
relies on exact behavioral equivalence. Extending the framework to approximate
behavioral equivalence, where predictors induce nearly identical rather than
identical behaviors, remains an important direction for future work. Second,
the theory is structural rather than causal: it identifies realization-level
complexity exactly but does not establish a causal relationship between
realization multiplicity and improved generalization. Finally, PAC-Bayes
Z-information is a measure-theoretic quantity defined through conditional
relative entropy and should not be conflated with geometric quantities such
as flatness, although both arise naturally from the same fiber structure
induced by behavioral equivalence. Clarifying the precise relationships among
information-theoretic, geometric, and optimization-based notions of
realization multiplicity remains an important direction for future research.

\subsection{Future Work}

The framework suggests several directions for future work:

\begin{enumerate}
\item Developing computable estimators or practical proxies for PAC-Bayes
Z-information in modern neural networks.
\item Extending the theory to approximate behavioral equivalence, where
predictors are nearly indistinguishable rather than exactly identical.
\vspace*{-3pt}
\item Investigating whether optimization procedures implicitly favor regions of
high realization multiplicity.
\vspace*{-3pt}
\item Extending the behavior-realization decomposition to other
learning-theoretic frameworks, including stability-based,
information-theoretic, and compression-based analyses.
\item Empirically studying relationships among realization multiplicity,
generalization, robustness, calibration, and uncertainty estimation in
over-parameterized models.
\end{enumerate}

\subsection{Conclusion}

The behavior--realization decomposition shows that predictive behavior,
rather than internal realization, provides the natural level at which to
analyze PAC-Bayes complexity. This paper demonstrates that the distinction
between behavior-selection complexity and realization-level complexity arises
from the classical PAC-Bayes framework through an exact decomposition of
relative entropy together with its variational characterization. By making
this structure explicit, the resulting framework separates uncertainty over
predictive behavior from realization multiplicity and provides a unified
measure-theoretic interpretation of complexity, symmetry, and invariance in
over-parameterized learning systems. We expect this framework to provide a
foundation for future work on behavior-space learning theory, approximate
behavioral equivalence, invariance, and complexity measures for modern
over-parameterized models.

\section*{Acknowledgments}
This work was supported in part by grants from the National Science Foundation (NSF) and Penn State Clinical and Translational Science Institute (CTSI).

\newpage
\bibliography{zentropy_bibliography_from_main_complete}
\bibliographystyle{tmlr}

\newpage
\section*{Appendix}
\appendix
\section{Measure-Theoretic Preliminaries}
\label{app:preliminaries}

This appendix reviews the measure-theoretic concepts underlying the framework
developed in the main text, including behavioral equivalence, fibers,
pushforward measures, and disintegration. These constructions are standard in
probability and measure theory
\citep{kallenberg2002foundations,parthasarathy1967probability,
bogachev2007measure,klenke2013probability}. The purpose of this appendix is
not to develop new measure theory, but to establish notation and provide the
background needed for the behavior--realization decomposition used throughout
the paper.

\subsection{Behavioral Equivalence and Fibers}

Throughout the paper, predictive behavior is represented by a measurable
behavior map $\beta:\Theta\rightarrow\mathcal K$, where $\Theta$ is the
configuration space, and $\mathcal K$ is a space of predictive behaviors.

Behavioral equivalence is induced by this map:
\[
\theta\sim\theta'
\quad\Longleftrightarrow\quad
\beta(\theta)=\beta(\theta').
\]
Thus, two configurations are behaviorally equivalent precisely when they induce
the same predictive behavior.

For a behavior $k\in\mathcal K$, the associated fiber is

\[
F_k
=
\beta^{-1}(k)
=
\{\theta\in\Theta:\beta(\theta)=k\}.
\]

A fiber, therefore, contains every configuration that realizes the same
predictive behavior.

\paragraph{Example.}

A familiar instance of behavioral equivalence is provided by hidden
unit permutation symmetry in neural networks, where permuting exchangeable hidden
units leaves the input–output mapping unchanged. Such parameter
configurations, therefore, belong to the same behavioral fiber. A detailed
illustration appears in Section~\ref{sec:zinfo}, and the corresponding
realization-level computation is worked out in Appendix~\ref{app:worked:permutations}.

\subsection{Pushforward Measures and Behavior Distributions}

Let $Q\in\mathcal P(\Theta)$ be a probability distribution over
configurations.

The behavior map induces a probability distribution over behaviors through the
pushforward measure (also called the image measure)
\[
\pi_Q
=
\beta_\#Q
=
Q\circ\beta^{-1}.
\]
For any measurable subset $A\subseteq\mathcal K$,
\[
\pi_Q(A)
=
Q(\beta^{-1}(A)).
\]
Intuitively, $\pi_Q$ records how much probability mass $Q$ assigns to each
predictive behavior while ignoring how that mass is distributed among
equivalent realizations within a fiber.

The notation $\beta_\#Q$ is standard shorthand for the pushforward (or image)
measure induced by the measurable map $\beta$. Pushforward measures provide the
canonical mechanism by which probability distributions are transported through
measurable mappings
\citep{kallenberg2002foundations,bogachev2007measure}.

\subsection{Disintegration of Measures}

A central mathematical tool used throughout the paper is disintegration.

Disintegration is the measure-theoretic analog of conditioning and provides a
canonical decomposition of a probability measure relative to a measurable map
\citep{parthasarathy1967probability,kallenberg2002foundations,
bogachev2007measure,klenke2013probability}.

Assume that $(\Theta,\mathcal F_\Theta)$ is a standard Borel space and that
$\beta:\Theta\rightarrow\mathcal K$ is measurable. These are the same
assumptions introduced in Section~\ref{sec:setup}; they ensure the existence
of regular conditional distributions with respect to the behavior map.

Then standard disintegration theorems imply that every probability measure
$Q\in\mathcal P(\Theta)$ admits a regular conditional distribution with
respect to the behavior map $\beta$, supported on the fibers of $\beta$, and
therefore a decomposition of the form
\[
Q(d\theta)
=
Q(d\theta\mid k)\,
\pi_Q(dk).
\]
Here $Q(\cdot\mid k)$ is a conditional probability measure supported on the
fiber $F_k=\beta^{-1}(k)$.

Informally, disintegration separates uncertainty into two components:

\begin{enumerate}
\item uncertainty over predictive behaviors, represented by $\pi_Q$;

\item uncertainty over realizations conditional on a fixed predictive
behavior, represented by $Q(\cdot\mid k)$.
\end{enumerate}

This disintegration is the measure-theoretic foundation of the
behavior–realization decomposition developed in the main text.
Appendix~\ref{app:kl-decomposition} applies the classical chain rule for
relative entropy to this decomposition, with the measurable map taken to be the
behavior map $\beta$, yielding the behavior–realization decomposition of
classical PAC-Bayes complexity.

\subsection{Proofs of Behavioral Sufficiency Results}

\begin{proof}[Proof of Lemma~\ref{lem:loss-invariance}]

Behavioral equivalence implies
$\beta(\theta)=\beta(\theta')$, and therefore
$P_\theta(\cdot\mid x)=P_{\theta'}(\cdot\mid x)$ for every input $x$.

Since both population and empirical risks depend on a configuration only
through the predictive behavior represented by $\beta(\theta)$, it follows
immediately that
\[
L(\theta)=L(\theta'),
\qquad
\hat L_S(\theta)=\hat L_S(\theta').
\]
\end{proof}

\begin{proof}[Proof of Proposition~\ref{prop:sufficiency}]

By Lemma~\ref{lem:loss-invariance}, both population and empirical risks are
constant on fibers. Since $\beta$ is measurable and both risks are measurable
functions on $\Theta$, standard measurable-factorization results for functions
that are constant on the fibers of a measurable map
\citep{kallenberg2002foundations} imply the existence of measurable functions
$L_{\mathcal K}$ and $\hat L_{S,\mathcal K}$ on $\mathcal K$ such that
\[
L(\theta)=L_{\mathcal K}(\beta(\theta)),
\qquad
\hat L_S(\theta)=\hat L_{S,\mathcal K}(\beta(\theta)).
\]
Using the factorization above together with the pushforward measure,
\[
L(Q)
=
\mathbb E_{\theta\sim Q}
\!\left[
L_{\mathcal K}(\beta(\theta))
\right]
=
\mathbb E_{k\sim\pi_Q}
\!\left[
L_{\mathcal K}(k)
\right].
\]
If $\pi_Q=\pi_{Q'}$, the expectations coincide, yielding
\[
L(Q)=L(Q').
\]
Applying the same argument to $\hat L_{S,\mathcal K}$ gives
\[
\hat L_S(Q)=\hat L_S(Q').
\]
\end{proof}

\section{KL Decomposition and PAC-Bayes Z-Information}
\label{app:kl-decomposition}

This appendix provides proofs of the results in
Section~\ref{sec:zinfo}. The proofs rely on the classical chain rule for
relative entropy under measure disintegration, which decomposes relative
entropy into marginal and conditional contributions relative to a measurable
map
\citep{kallenberg2002foundations,parthasarathy1967probability,
bogachev2007measure}. The novelty of the present work lies not in this
measure-theoretic identity itself, but in its application to the behavior map
$\beta:\Theta\rightarrow\mathcal K$ introduced in
Section~\ref{sec:setup}, which yields the behavior–realization decomposition, its variational characterization, and the resulting behavior-level formulation of PAC-Bayes complexity.

Throughout, let $P,Q\in\mathcal P(\Theta)$ denote the prior and posterior
distributions, and let
$\pi_P=\beta_\#P$ and $\pi_Q=\beta_\#Q$
be the corresponding induced distributions on the behavior space.

Unless otherwise stated, we assume
\[
\mathrm{KL}(Q\|P)<\infty,
\]
which is the regime relevant to classical PAC-Bayes analysis. Under this
assumption, \(Q\ll P\), so the chain rule for relative entropy under
measure disintegration applies directly. The decomposition extends in the
usual extended-real sense when
\[
\mathrm{KL}(Q\|P)=\infty.
\]

Under the standard Borel assumptions of
Section~\ref{sec:setup}, both measures admit disintegrations
\[
P(d\theta)=P(d\theta\mid k)\,\pi_P(dk),
\qquad
Q(d\theta)=Q(d\theta\mid k)\,\pi_Q(dk).
\]

\subsection{Proof of Proposition~\ref{prop:kl-decomposition}}

\begin{proof}

Assume \[\mathrm{KL}(Q\|P)<\infty.\]
Then $Q\ll P$, and therefore
\[\pi_Q=\beta_\#Q\ll\beta_\#P=\pi_P.\]

Applying the chain rule for relative entropy with respect to the measurable
map $\beta$
\citep{kallenberg2002foundations,bogachev2007measure}
yields:
\[
\mathrm{KL}(Q\|P)
=
\mathrm{KL}(\pi_Q\|\pi_P)
+
\mathbb E_{k\sim\pi_Q}
\!\left[
\mathrm{KL}
\!\left(
Q(\cdot\mid k)
\,\middle\|\,
P(\cdot\mid k)
\right)
\right].
\]
\end{proof}

The nonnegativity of the conditional term immediately implies
Corollary~\ref{cor:dpi}, namely
\[
\mathrm{KL}(\pi_Q\|\pi_P)
\le
\mathrm{KL}(Q\|P).
\]
\subsection{Nonpositivity of PAC-Bayes Z-Information}

The statement following Definition~\ref{def:zinfo} follows directly from
the non-negativity of KL divergence.

\begin{proof}
For every behavior $k$,
\[
\mathrm{KL}
\!\left(
Q(\cdot\mid k)
\,\middle\|\,
P(\cdot\mid k)
\right)
\ge 0.
\]
Taking expectations preserves nonnegativity, so
\[
\mathcal Z_{\mathrm{PB}}(Q\|P)
=
-
\mathbb E_{k\sim\pi_Q}
\!\left[
\mathrm{KL}
\!\left(
Q(\cdot\mid k)
\,\middle\|\,
P(\cdot\mid k)
\right)
\right]
\le 0.
\]
Equality holds if and only if the conditional KL divergence vanishes
for $\pi_Q$-almost every $k$. By the standard characterization of
equality in relative entropy, this is equivalent to
\[
Q(\cdot\mid k)=P(\cdot\mid k)
\]
for $\pi_Q$-almost every $k$.
\end{proof}

\subsection{Proof of Proposition~\ref{prop:zinfo-identity}}

\begin{proof}
Rearranging the decomposition of
Proposition~\ref{prop:kl-decomposition} and substituting
Definition~\ref{def:zinfo} yields
\[
\mathcal Z_{\mathrm{PB}}(Q\|P)
=
\mathrm{KL}(\pi_Q\|\pi_P)
-
\mathrm{KL}(Q\|P).
\]
Or equivalently,
\[
\mathrm{KL}(\pi_Q\|\pi_P)
=
\mathrm{KL}(Q\|P)
+
\mathcal Z_{\mathrm{PB}}(Q\|P),
\]
which is the claimed identity.
\end{proof}

\subsection{Proof of Theorem~\ref{thm:optimal-representative}}

\begin{proof}
Recall that
\[
Q^\star(d\theta)
=
\int_{\mathcal K}
P(d\theta\mid k)\,
\pi_Q(dk).
\]
Because \(P(\cdot\mid k)\) is the measurable probability kernel obtained by
disintegrating \(P\) with respect to \(\beta\), the measure
$Q^\star$ is a well-defined probability measure on $\Theta$.
The existence of this measurable conditional kernel follows from the
standard Borel assumptions of Section~\ref{sec:setup} and the
disintegration theorem reviewed in
Appendix~\ref{app:preliminaries}.

For any measurable set $B\subseteq\mathcal K$,
\[
\pi_{Q^\star}(B)
=
Q^\star(\beta^{-1}(B))
=
\int_{\mathcal K}
P(\beta^{-1}(B)\mid k)\,
\pi_Q(dk).
\]
Because $P(\cdot\mid k)$ is supported on the fiber
$\beta^{-1}(k)$,
\[
P(\beta^{-1}(B)\mid k)
=
\mathbf 1_B(k).
\]
Therefore,
\[
\pi_{Q^\star}(B)
=
\int_{\mathcal K}
\mathbf 1_B(k)\,
\pi_Q(dk)
=
\pi_Q(B),
\]
showing that $\pi_{Q^\star}=\pi_Q$.

Applying Proposition~\ref{prop:kl-decomposition} to $Q^\star$ yields
\[
\mathrm{KL}(Q^\star\|P)
=
\mathrm{KL}(\pi_Q\|\pi_P),
\]
because
\[
Q^\star(d\theta)
=
P(d\theta\mid k)\,\pi_Q(dk),
\]
which is already a disintegration of $Q^\star$ with respect to the
behavior marginal $\pi_Q=\pi_{Q^\star}$.
Hence, a valid conditional distribution of $Q^\star$ is
\[
Q^\star(\cdot\mid k)
=
P(\cdot\mid k)
\]
for $\pi_Q$-almost every $k$. Consequently,
\[
\mathrm{KL}
\!\left(
Q^\star(\cdot\mid k)
\,\middle\|\,
P(\cdot\mid k)
\right)
=
0
\]
for $\pi_Q$-almost every $k$.

Now let $Q'$ satisfy $\pi_{Q'}=\pi_Q$.
Applying Proposition~\ref{prop:kl-decomposition} again gives
\[
\mathrm{KL}(Q'\|P)
=
\mathrm{KL}(\pi_Q\|\pi_P)
+
\mathbb E_{k\sim\pi_Q}
\!\left[
\mathrm{KL}
\!\left(
Q'(\cdot\mid k)
\,\middle\|\,
P(\cdot\mid k)
\right)
\right].
\]
The second term is nonnegative, so
\[
\mathrm{KL}(Q'\|P)
\ge
\mathrm{KL}(\pi_Q\|\pi_P).
\]
Since equality is achieved by $Q^\star$,
\[
\inf_{Q':\,\pi_{Q'}=\pi_Q}
\mathrm{KL}(Q'\|P)
=
\mathrm{KL}(\pi_Q\|\pi_P).
\]
\end{proof}

\subsection{Behavioral Sufficiency and Risk Preservation}

The variational characterization has learning-theoretic significance because
predictive risk depends only on behavior.

By Proposition~\ref{prop:sufficiency}, if two posteriors induce the same
behavior distribution, then they have identical population and empirical
risks. Since $\pi_{Q^\star}=\pi_Q$, Proposition~\ref{prop:sufficiency}
immediately yields
\[
L(Q^\star)=L(Q),
\qquad
\hat L_S(Q^\star)=\hat L_S(Q).
\]
Consequently, replacing $Q$ with $Q^\star$ preserves both empirical and
population performance while achieving the minimum classical PAC-Bayes
complexity among all posteriors inducing the same distribution over
predictive behaviors. This observation underlies the behavior-level PAC-Bayes interpretation
developed in Section~\ref{sec:pacbayes}.

\subsection{Interpretation}

Proposition~\ref{prop:kl-decomposition} specializes the classical chain rule
for relative entropy to the behavior map. This specialization separates
classical PAC-Bayes complexity into a behavior-selection term,
\[
\mathbb E_{k\sim\pi_Q}
\!\left[
\mathrm{KL}
\!\left(
Q(\cdot\mid k)
\,\middle\|\,
P(\cdot\mid k)
\right)
\right].
\]
This term measures the divergence between the posterior and prior
conditional distributions within behavioral fibers and therefore
quantifies realization-level variation after predictive behavior has
been fixed.

PAC-Bayes Z-information is the negative of the realization-level term.
Consequently, it quantifies the portion of classical PAC-Bayes
complexity attributable to variation among behaviorally equivalent
realizations.

Theorem~\ref{thm:optimal-representative} provides a complementary
variational interpretation. The quantity
\[
\mathrm{KL}(\pi_Q\|\pi_P)
=
\inf_{Q':\,\pi_{Q'}=\pi_Q}
\mathrm{KL}(Q'\|P)
\]
is the minimum classical PAC-Bayes complexity among all
configuration-space posteriors that induce the same behavior-level
distribution. Equivalently, it is the irreducible complexity associated
with the predictive behaviors encoded by $\pi_Q$ after all
realization-level variation has been removed.

The quantity
$-\mathcal Z_{\mathrm{PB}}(Q\|P)$ is precisely the optimality gap between the
classical PAC-Bayes complexity of $Q$ and the minimum complexity achievable
among all posteriors inducing the same behavioral distribution. It therefore quantifies exactly the excess classical PAC-Bayes complexity
attributable to realization-level variation within behavioral fibers.

\section{Behavior-Level PAC-Bayes Analysis}
\label{app:behavior-pacbayes}

This appendix provides proofs and supporting details for the behavior-level
PAC-Bayes analysis in Section~\ref{sec:pacbayes}. The behavior-level bound is
obtained by applying the classical PAC-Bayes theorem directly to the
measurable behavior space $\mathcal K$. The contribution is therefore not a
new concentration inequality but the identification of the corresponding
complexity term with the behavior-selection component of the exact
behavior–realization decomposition.

\subsection{Risk Depends Only on Behavior}

We first make explicit the risk identities used in the proof of the
Behavior-space formulation of the PAC-Bayes bound.

Let $
\pi_Q=\beta_\#Q
$
denote the pushforward (image) measure induced by the behavior map
$\beta$.

By Proposition~\ref{prop:sufficiency}, both population and empirical
risk depend on a posterior only through its induced behavior
distribution. Consequently,
\[
L(Q)=L(\pi_Q),
\qquad
\hat L_S(Q)=\hat L_S(\pi_Q).
\]
Equivalently, if two configuration-space posteriors satisfy
\[
\pi_Q=\pi_{Q'},
\]
then
\[
L(Q)=L(Q'),
\qquad
\hat L_S(Q)=\hat L_S(Q').
\]
A detailed proof of this factorization is given in
Appendix~\ref{app:preliminaries}.

These identities allow the behavior-space PAC-Bayes bound to be expressed
entirely in terms of the configuration-space posterior $Q$.

\subsection{Proof of Theorem~\ref{thm:behavior-pacbayes}}

\begin{proof}

By assumption, $\mathcal K$ is a standard Borel space. The prior $P$ on
$\Theta$ induces a prior
\[
\pi_P=\beta_\#P
\]
on $\mathcal K$.

Because $\mathcal K$ is a standard Borel space and $\pi_P$ is a
probability measure on $\mathcal K$, the classical PAC-Bayes theorem
applies directly on the behavior space $\mathcal K$
\citep{mcallester1999pac,seeger2002pac}. Thus, with probability
at least $1-\delta$ over the draw of $S\sim\mathcal D^n$,
simultaneously for all behavior-space posteriors
$\rho\in\mathcal P(\mathcal K)$,
\[
\mathrm{KL}
\!\left(
\hat L_S(\rho)
\,\middle\|\,
L(\rho)
\right)
\le
\frac1n
\left(
\mathrm{KL}(\rho\|\pi_P)
+
\log\frac{2\sqrt n}{\delta}
\right).
\]
Now let
\[
\rho=\pi_Q,
\]
where $Q$ is an arbitrary configuration-space posterior.

Substituting these identities into the PAC-Bayes bound yields
\[
\mathrm{KL}
\!\left(
\hat L_S(Q)
\,\middle\|\,
L(Q)
\right)
\le
\frac1n
\left(
\mathrm{KL}(\pi_Q\|\pi_P)
+
\log\frac{2\sqrt n}{\delta}
\right).
\]
Since $Q$ was arbitrary, the inequality holds simultaneously for all
configuration-space posteriors $Q$, establishing the theorem.

\end{proof}

\subsection{Proof of Proposition~\ref{prop:complexity-gap}}

\begin{proof}

By Proposition~\ref{prop:zinfo-identity},
\[
\mathcal Z_{\mathrm{PB}}(Q\|P)
=
\mathrm{KL}(\pi_Q\|\pi_P)
-
\mathrm{KL}(Q\|P).
\]
Rearranging gives
\[
\mathrm{KL}(Q\|P)
-
\mathrm{KL}(\pi_Q\|\pi_P)
=
-\mathcal Z_{\mathrm{PB}}(Q\|P).
\]
Since
\[
\mathcal Z_{\mathrm{PB}}(Q\|P)\le0,
\]
we have
\[
-\mathcal Z_{\mathrm{PB}}(Q\|P)\ge0.
\]
Therefore
\[
\mathrm{KL}(\pi_Q\|\pi_P)
\le
\mathrm{KL}(Q\|P).
\]
\end{proof}

\subsection{Proof of Corollary~\ref{cor:zinfo-pacbayes}}

\begin{proof}

Theorem~\ref{thm:behavior-pacbayes} gives
\[
\mathrm{KL}
\!\left(
\hat L_S(Q)
\,\middle\|\,
L(Q)
\right)
\le
\frac1n
\left(
\mathrm{KL}(\pi_Q\|\pi_P)
+
\log\frac{2\sqrt n}{\delta}
\right).
\]
Using Proposition~\ref{prop:zinfo-identity},
\[
\mathrm{KL}(\pi_Q\|\pi_P)
=
\mathrm{KL}(Q\|P)
+
\mathcal Z_{\mathrm{PB}}(Q\|P).
\]
Substituting this identity into the behavior-space PAC-Bayes bound gives
\[
\mathrm{KL}
\!\left(
\hat L_S(Q)
\,\middle\|\,
L(Q)
\right)
\le
\frac1n
\left(
\mathrm{KL}(Q\|P)
+
\mathcal Z_{\mathrm{PB}}(Q\|P)
+
\log\frac{2\sqrt n}{\delta}
\right).
\]
\end{proof}

\subsection{Interpretation}

The behavior-space PAC-Bayes bound is not a new concentration inequality.
Rather, it is the classical PAC-Bayes theorem applied to the measurable space
of predictive behaviors. Its significance lies in identifying the resulting
complexity term with the behavior-selection component of the exact
behavior--realization decomposition.

The resulting complexity term,
\[
\mathrm{KL}(\pi_Q\|\pi_P),
\]
depends only on uncertainty over predictive behavior. By
Theorem~\ref{thm:optimal-representative},
\[
\mathrm{KL}(\pi_Q\|\pi_P)
=
\inf_{Q':\,\pi_{Q'}=\pi_Q}
\mathrm{KL}(Q'\|P),
\]
so it is the minimum classical PAC-Bayes complexity among all
configuration-space posteriors that induce the same distribution over
predictive behaviors. By Theorem~\ref{thm:optimal-representative},
this minimum is attained by the fiber-symmetrized posterior
$Q^\star$, which preserves predictive behavior while eliminating
all realization-level divergence.

PAC-Bayes Z-information quantifies the exact gap between
configuration-space complexity and behavior-selection complexity. Since
\[
\mathrm{KL}(\pi_Q\|\pi_P)
=
\mathrm{KL}(Q\|P)
+
\mathcal Z_{\mathrm{PB}}(Q\|P),
\]
and
\[
\mathcal Z_{\mathrm{PB}}(Q\|P)\le0,
\]
Consequently, the behavior-space formulation isolates the irreducible
complexity associated with predictive behavior while removing complexity
arising solely from variation among behaviorally equivalent realizations,
without changing either empirical or population risk.

\section{Continuous-Space Interpretation of the Realization-Level KL Term}
\label{app:continuous-zinfo}

This appendix develops a continuous-space interpretation of the
realization-level KL divergence under additional assumptions on the
conditional prior and a chosen reference measure on each behavioral fiber.

The purpose of this appendix is to develop the geometric intuition for realization multiplicity discussed
in Section~\ref{sec:geometry}. The results in this appendix are not used in any proof of the main PAC-Bayes results. 
Under additional assumptions on the conditional
prior and a chosen reference measure on each behavioral fiber, the
realization-level KL divergence—and hence PAC-Bayes Z-information—admits an
occupancy-based interpretation relative to realization multiplicity.

The results of this appendix are supplementary and are not used in any proof
of the main PAC-Bayes results. The theory developed in
Sections~\ref{sec:setup}--\ref{sec:pacbayes} requires only the
measure-theoretic KL decomposition and remains valid independently of the
continuous-space interpretation presented here.

\subsection{Conditional KL Divergence on a Fiber}

Let $k\in\mathcal K$ and let
$F_k=\beta^{-1}(k)$ denote the corresponding behavioral fiber.

Assume that $F_k$ is equipped with a reference measure $\mu_k$, and that the
conditional prior and posterior admit densities
$p_k(\theta)$ and $q_k(\theta)$ with respect to $\mu_k$.

The choice of $\mu_k$ is not canonical and may depend on the application.
All densities and volume quantities in this appendix are understood relative
to the same fixed reference measure on the fiber.

The fiber-wise KL divergence is
\[
\mathrm{KL}
\big(
Q(\cdot\mid k)
\,\|\,P(\cdot\mid k)
\big)
=
\int_{F_k}
q_k(\theta)
\log
\frac{q_k(\theta)}{p_k(\theta)}
\,d\mu_k(\theta).
\]
This quantity measures posterior concentration relative to the conditional
prior within the behavioral fiber.

As emphasized in Section~\ref{sec:geometry}, the KL divergence itself does not
require any notion of geometric volume. The reference measure is introduced
only to provide intuition about realization multiplicity in continuous
configuration spaces.

\subsection{A Special Case: Uniform Conditional Priors}

Suppose that the conditional prior is uniform on the fiber,
$p_k(\theta)=1/\operatorname{Vol}(F_k)$,
where $\operatorname{Vol}(F_k)=\mu_k(F_k)$.

For simplicity, assume throughout this subsection that
$0<\operatorname{Vol}(F_k)<\infty$.

Substituting into the definition of KL divergence yields
\[
\begin{aligned}
\mathrm{KL}
\big(
Q(\cdot\mid k)
\,\|\,P(\cdot\mid k)
\big)
&=
\int_{F_k}
q_k(\theta)
\log
\frac{q_k(\theta)}
     {1/\operatorname{Vol}(F_k)}
\,d\mu_k(\theta)
\\
&=
\int_{F_k}
q_k(\theta)\log q_k(\theta)\,d\mu_k(\theta)
+
\log\operatorname{Vol}(F_k).
\end{aligned}
\]
Define the differential entropy of the posterior conditional distribution
on the fiber (relative to $\mu_k$) by
\[
H_k(Q)
=
-
\int_{F_k}
q_k(\theta)\log q_k(\theta)\,d\mu_k(\theta).
\]
As with differential entropy generally
\citep{cover2006elements}, the numerical value of $H_k(Q)$ depends on the
choice of reference measure. The KL divergence itself remains invariant
because both densities are defined relative to the same reference measure.

\begin{proposition}[Entropy-Volume Identity]
\label{prop:entropy-volume}

If the conditional prior is uniform on a fiber $F_k$, then
\[
\mathrm{KL}
\big(
Q(\cdot\mid k)
\,\|\,P(\cdot\mid k)
\big)
=
\log\operatorname{Vol}(F_k)
-
H_k(Q).
\]
\end{proposition}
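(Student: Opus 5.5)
The plan is to compute the fiberwise KL divergence directly from its density representation relative to the fixed reference measure $\mu_k$. The computation preceding the statement has essentially done this, so the proof only has to justify the algebra.

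First, I will check that the density formula for the KL divergence is legitimate. The conditional prior has constant density $p_k = 1/\operatorname{Vol}(F_k)$, which is strictly positive on $F_k$ because $0<\operatorname{Vol}(F_k)<\infty$. Hence $Q(\cdot\mid k)\ll\mu_k \sim P(\cdot\mid k)$ on $F_k$. This means $Q(\cdot\mid k)\ll P(\cdot\mid k)$ automatically, with Radon--Nikodym derivative $q_k/p_k = \operatorname{Vol}(F_k)\,q_k$.

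Second, I will substitute $p_k$ into $\int_{F_k} q_k\log(q_k/p_k)\,d\mu_k$ and split the logarithm as $\log q_k + \log\operatorname{Vol}(F_k)$. The constant term integrates against the probability density $q_k$ to give exactly $\log\operatorname{Vol}(F_k)$, since $\int_{F_k} q_k\,d\mu_k = 1$. The remaining term is $\int q_k\log q_k\,d\mu_k = -H_k(Q)$ by definition, which yields $\log\operatorname{Vol}(F_k) - H_k(Q)$.

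The only real obstacle is integrability, namely making sure that splitting the integral does not produce an $\infty-\infty$ expression. The constant piece is finite. For the other piece, the integrand $q_k\log q_k$ differs from the KL integrand $q_k\log(q_k/p_k)$ by the bounded constant multiple $q_k\log\operatorname{Vol}(F_k)$, which is integrable. So $q_k\log q_k$ is quasi-integrable exactly when the KL integrand is, and the KL integrand always has integrable negative part since $x\log x\ge -1/e$ relative to the finite measure $p_k\,d\mu_k$. It follows that the identity holds in the extended reals. In particular, the KL divergence is finite if and only if $H_k(Q)$ is finite, and when $H_k(Q)=-\infty$ both sides equal $+\infty$. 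I will note this convention explicitly and otherwise treat the computation as routine.
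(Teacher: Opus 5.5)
Your proposal is correct and follows the paper's proof: substitute $p_k = 1/\operatorname{Vol}(F_k)$ into $\int_{F_k} q_k\log(q_k/p_k)\,d\mu_k$, then split off the constant $\log\operatorname{Vol}(F_k)$ using $\int_{F_k} q_k\,d\mu_k = 1$. Your additional checks go beyond what the paper states and are sound: absolute continuity, and the legitimacy of the split in the extended reals, since the negative part of the integrand is at most $p_k/e$. The one fix is wording: $q_k\log\operatorname{Vol}(F_k)$ is integrable, not bounded.
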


\begin{proof}

Substituting
$p_k(\theta)=1/\operatorname{Vol}(F_k)$
into the definition of KL divergence gives
\[
\mathrm{KL}
\big(
Q(\cdot\mid k)
\,\|\,P(\cdot\mid k)
\big)
=
\int_{F_k}
q_k(\theta)\log q_k(\theta)\,d\mu_k(\theta)
+
\log\operatorname{Vol}(F_k).
\]
Using the definition of $H_k(Q)$ yields the result.
\end{proof}

The quantity $\operatorname{Vol}(F_k)$ is defined relative to the chosen
reference measure $\mu_k$ and, therefore, is not an intrinsic geometric
invariant of the fiber. The interpretation developed below concerns
realization multiplicity relative to the specified reference measure.

Accordingly, Proposition~\ref{prop:entropy-volume} should be viewed as a
special-case interpretation rather than a general characterization of the
realization-level KL divergence. For arbitrary conditional priors, the fiber-wise KL
divergence contains additional terms reflecting the variation of the conditional
prior within the fiber. The uniform-prior case isolates the contribution of
posterior occupancy relative to the available reference-measure size of the
fiber.

Although exact uniformity is rarely satisfied in practice, the resulting
interpretation remains informative whenever the conditional prior is
approximately uniform across a fiber. In such cases, the fiber-wise KL
divergence differs from the expression in
Proposition~\ref{prop:entropy-volume} by an additional term that quantifies
departures from uniform occupancy.

\subsection{Interpretation}

Proposition~\ref{prop:entropy-volume} separates the fiber-wise KL divergence
into two contributions:

\begin{enumerate}
\vspace*{-3pt}
\item the logarithm of the reference-measure size of the fiber,
$\log\operatorname{Vol}(F_k)$;
\vspace*{-3pt}
\item the entropy of posterior occupancy within that fiber,
$H_k(Q)$.
\end{enumerate}

The KL divergence is small when the posterior mass is distributed broadly
throughout the fiber and large when the posterior concentrates on a relatively
small subset of realizations.

Thus, under the uniform conditional-prior assumption, the realization-level
KL divergence can be interpreted as measuring posterior occupancy relative to
the available reference-measure size of a behavioral fiber.

Moreover, under the same assumption,
\[
\mathcal Z_{\mathrm{PB}}(Q\|P)
=
\mathbb E_{k\sim\pi_Q}
\Big[
H_k(Q)
-
\log\operatorname{Vol}(F_k)
\Big].
\]

Accordingly, PAC-Bayes Z-information may be interpreted as the expected
difference between posterior occupancy entropy and the logarithm of the
available reference-measure size of a behavioral fiber. This interpretation is
specific to the uniform-prior setting and serves only as geometric intuition
for the general measure-theoretic definition.

\subsection{Relation to Geometry and Scope}

The calculations in this appendix are standard consequences of relative
entropy and measure-theoretic probability
\citep{cover2006elements,csiszar1975divergence,amari2000methods,
kallenberg2002foundations}.

Section~\ref{sec:geometry} interprets realization multiplicity in terms of
symmetry, reference-measure size, and behavior-preserving directions. The
occupancy interpretation developed here provides an information-theoretic
counterpart to that geometric picture.

Relative to the chosen reference measure, large fibers correspond to
predictive behaviors that admit many realizations. In continuous settings, this realization multiplicity may arise through large
reference-measure size, high-dimensional families of behavior-preserving
directions, or other forms of geometric redundancy.

When the posterior mass remains broadly distributed throughout such fibers, the
realization-level KL term is small. Concentration onto a relatively small
subset of realizations increases the realization-level contribution and makes
PAC-Bayes Z-information more negative.

Under the assumptions above, PAC-Bayes Z-information measures occupancy
relative to available realization multiplicity rather than uncertainty over
predictive behavior itself.

This interpretation should be viewed as an intuition-building special case
rather than as part of the core theory. The behavior--realization
decomposition of Appendix~\ref{app:kl-decomposition} and the PAC-Bayes results
of Sections~\ref{sec:zinfo} and~\ref{sec:pacbayes} rely only on
measure-theoretic disintegration and relative entropy; they do not require
uniform conditional priors, finite-volume fibers, or any geometric notion of
reference measure. The assumptions introduced here serve only to provide a
more concrete geometric interpretation of realization multiplicity in
continuous parameter spaces.

\section{Geometry of Behavioral Fibers}
\label{app:geometry}

This appendix provides technical details supporting the geometric
interpretation of realization multiplicity presented in
Section~\ref{sec:geometry}.

The purpose is not to develop a full differential-geometric theory of
behavioral fibers. Rather, we establish several elementary geometric facts
that connect behavioral equivalence, symmetry, behavior-preserving
directions, and realization multiplicity.

Throughout, let
$\beta:\Theta\rightarrow\mathcal K$
denote the behavior map introduced in
Section~\ref{sec:setup}. For a behavior $k\in\mathcal K$, the associated
fiber is
\[
F_k=\beta^{-1}(k).
\]
The geometric terminology used in this appendix follows standard notions
of differentiable curves and tangent vectors
\citep{lee2018riemannian,docarmo1992riemannian}. No manifold structure on
fibers is assumed beyond what is required for the specific statements.

\subsection{Proof of Proposition~\ref{prop:symmetry-zinfo}}
\label{app:proof-symmetry-zinfo}

Recall Proposition~\ref{prop:symmetry-zinfo}.

\begin{proof}
Nonnegativity follows from the basic properties of KL divergence.

Suppose the fiber $F_k$ consists of a finite symmetry orbit of size $m$
and that the conditional prior is uniform on that orbit:
\[
P(\theta\mid k)=\frac1m.
\]
For any conditional posterior $Q(\cdot\mid k)$, using the convention
$0\log 0=0$,
\[
\mathrm{KL}
\!\left(
Q(\cdot\mid k)
\,\middle\|\,
P(\cdot\mid k)
\right)
=
\sum_{\theta\in F_k}
Q(\theta\mid k)
\log\!\bigl(m\,Q(\theta\mid k)\bigr).
\]
The minimum value is $0$, attained when
$Q(\cdot\mid k)=P(\cdot\mid k)$.

The maximum value is attained when the posterior concentrates all of its
mass on a single realization, in which case the divergence equals
$\log m$. Therefore
\[
0
\le
\mathrm{KL}
\!\left(
Q(\cdot\mid k)
\,\middle\|\,
P(\cdot\mid k)
\right)
\le
\log m.
\]
\end{proof}

The proposition shows that increasing the size of a symmetry orbit increases
the maximum possible realization-level contribution to classical PAC-Bayes
complexity while leaving predictive behavior unchanged.

The finite-orbit setting considered here may be viewed as the discrete
counterpart of the occupancy interpretation developed in
Appendix~\ref{app:continuous-zinfo}. In both cases, the conditional KL
term measures posterior concentration relative to the realization
structure associated with a fixed predictive behavior.

\subsection{Behavior-Preserving Curves}

Behavioral fibers may contain continuous families of realizations. When
the configuration space admits a differentiable structure, such families
can be represented by differentiable curves and their associated tangent
directions
\citep{lee2018riemannian,docarmo1992riemannian}.

Suppose
$\gamma:(-\varepsilon,\varepsilon)\to\Theta$
is a differentiable curve satisfying
$\gamma(t)\in F_k$
for all sufficiently small $t$.

Then every point on the curve induces the same predictive behavior.
Consequently, Proposition~\ref{prop:sufficiency} implies that both
population and empirical risk remain constant along the curve.

This observation provides the geometric counterpart of behavioral
equivalence: motion within a behavioral fiber changes the realization of
a predictor without changing the predictive behavior it induces.

\subsection{Proof of Proposition~\ref{prop:tangent-flat}}

Recall Proposition~\ref{prop:tangent-flat}.

\begin{proof}

Let
$\gamma(0)=\theta$
and
$\gamma'(0)=v$,
and suppose $\gamma(t)$ remains within a behavioral fiber.

Because predictive behavior is unchanged and
$\mathcal L$ depends on a configuration only through its predictive
behavior (Proposition~\ref{prop:sufficiency}),
$\mathcal L(\gamma(t))$
is constant.

Differentiating with respect to $t$ gives
\[
0
=
\frac{d}{dt}
\mathcal L(\gamma(t))
\Big|_{t=0}.
\]
Applying the chain rule yields
\[
\nabla\mathcal L(\theta)^\top\gamma'(0)
=
0.
\]
Since $\gamma'(0)=v$,
\[
\nabla\mathcal L(\theta)^\top v
=
0.
\]

\end{proof}

Thus, tangent directions to behavior-preserving curves preserve predictive
behavior to first order and are therefore necessarily first-order flat with
respect to any behavior-dependent objective.

The vector $v$ may therefore be interpreted as a behavior-preserving
direction at $\theta$. More precisely, $v$ is the tangent vector to a
curve that remains within a behavioral fiber and therefore preserves
predictive behavior to first order. The proposition shows that every
such tangent direction is necessarily first-order flat with respect to
any behavior-dependent objective.

This result is intentionally weaker than the flat-minimum analyses common
in deep learning
\citep{hochreiter1997flat,keskar2017sharp,jiang2020fantastic}. It
establishes only first-order invariance along behavioral fibers and does
not require assumptions about optimization trajectories, Hessian spectra,
or local curvature.

\subsection{Continuous Realization Multiplicity}

In discrete settings, realization multiplicity appears through finite
symmetry orbits.

In continuous settings, realization multiplicity may arise through
continuous families of equivalent realizations. Examples include
permutation symmetries, scaling symmetries, redundant representations,
and other forms of non-identifiability in neural networks
\citep{neyshabur2017exploring,dinh2017sharp}.

Behavioral equivalence therefore decomposes parameter-space variation into
two components: variation across behavioral fibers, which changes predictive
behavior, and variation within fibers, which changes only the realization of
that behavior. The latter corresponds precisely to realization multiplicity.
This geometric distinction mirrors the behavior--realization decomposition of
Sections~\ref{sec:zinfo} and~\ref{sec:pacbayes}.

From this perspective, the behavior map partitions parameter space into
behavioral fibers, each containing all realizations of a fixed predictive
behavior.

When many independent behavior-preserving directions exist, a single
predictive behavior may be implemented by a large family of
configurations. Geometrically, realization multiplicity may therefore be
associated with multiple independent behavior-preserving directions or,
relative to a chosen reference measure, with fibers that support many
realizations of the same predictive behavior.

Appendix~\ref{app:continuous-zinfo} shows that, under additional
assumptions on conditional priors and reference measures, the
realization-level KL term admits a corresponding occupancy-based
interpretation. Complementing this geometric perspective, the
variational characterization of
Theorem~\ref{thm:optimal-representative} shows that behavior-selection
complexity is obtained by removing variation within fibers while
preserving the induced distribution over predictive behaviors.

\subsection{Summary}

Behavioral fibers admit both discrete and continuous forms of realization
multiplicity.

Discrete multiplicity appears through symmetry orbits. Continuous
multiplicity may appear through behavior-preserving directions and
continuous families of equivalent realizations.

These structures do not alter predictive behavior, but they influence
how probability mass may be distributed among equivalent realizations.
Consequently, they provide geometric intuition for the realization-level
KL divergence that appears in the behavior--realization decomposition of
Appendix~\ref{app:kl-decomposition}. In particular, behavior-preserving
directions correspond to variations that leave predictive behavior
unchanged while contributing to the realization multiplicity quantified
by the conditional KL divergence and, equivalently, by PAC-Bayes
Z-information.

Together with Appendix~\ref{app:continuous-zinfo}, this geometric
perspective complements the measure-theoretic framework developed in the
main text by relating realization multiplicity to symmetry, occupancy,
and behavior-preserving directions in over-parameterized models. The
geometric interpretation is supplementary: the behavior--realization
decomposition and the resulting PAC-Bayes analysis rely only on the
measure-theoretic construction developed in
Appendix~\ref{app:kl-decomposition}.

\section{Behavioral Invariance and Fiber-Preserving Perturbations}
\label{app:stability}

This appendix provides technical details supporting the invariance results of
Section~\ref{sec:stability}.

The key observation is that behavioral equivalence separates perturbations
that alter predictive behavior from perturbations that merely change the
realization of that behavior. The latter induces a notion of behavioral
invariance that is distinct from classical algorithmic stability
\citep{bousquet2002stability,hardt2016train}.

\subsection{Proof of Proposition~\ref{prop:loss-invariance}}

Recall Proposition~\ref{prop:loss-invariance}.

\begin{proof}

Let $\theta,\theta'\in F_k$. By the definition of the fiber,
\[
\beta(\theta)=\beta(\theta')=k.
\]
Proposition~\ref{prop:sufficiency} shows that both population and
empirical risks depend on a configuration only through its induced
predictive behavior. Therefore,
\[
L(\theta)=L(\theta'),
\qquad
\hat L_S(\theta)=\hat L_S(\theta').
\]
\end{proof}

Thus, perturbations that remain within a behavioral fiber leave both
population and empirical risk unchanged.

\subsection{Behavioral Decomposition of Perturbations}

The fibers of the behavior map form a partition of the configuration
space.

Consequently, any perturbation of a configuration either

\begin{enumerate}
\item moves the configuration to a different fiber and thereby changes
predictive behavior; or

\item remains within the same fiber and therefore preserves predictive
behavior.
\end{enumerate}

This elementary observation underlies the distinction between
behavior-changing and fiber-preserving perturbations developed in
Section~\ref{sec:stability}. It also reflects the same
behavior--realization distinction that underlies the information-theoretic
decomposition developed in Sections~\ref{sec:zinfo}
and~\ref{sec:pacbayes}.

\subsection{Connection to Behavior-Preserving Directions}

Section~\ref{sec:geometry} and Appendix~\ref{app:geometry} established
that, under suitable regularity assumptions, tangent directions to
behavior-preserving curves are first-order flat directions of a
behavior-dependent objective
\citep{hochreiter1997flat,keskar2017sharp,dinh2017sharp}.

Specifically, if
\[
\gamma:(-\varepsilon,\varepsilon)\to\Theta
\]
is a differentiable curve contained in a behavioral fiber with
$\gamma(0)=\theta$ and $\gamma'(0)=v$,
then Proposition~\ref{prop:tangent-flat} implies that
\[
\nabla\mathcal L(\theta)^\top v = 0.
\]
Thus, fiber-preserving perturbations represented by differentiable curves
have zero first-order effect on any behavior-dependent objective.

This provides a geometric interpretation of the invariance associated
with behavioral equivalence. Under the regularity assumptions of
Proposition~\ref{prop:tangent-flat}, such directions arise as tangent
directions to behavior-preserving curves and represent first-order
variation among realizations that leaves predictive behavior unchanged.

\subsection{Behavioral Invariance versus Algorithmic Stability}

The invariance studied in this paper differs fundamentally from classical
algorithmic stability.

Algorithmic stability studies the sensitivity of learned predictors to
perturbations of the training dataset
\citep{bousquet2002stability,hardt2016train}.

By contrast, behavioral invariance concerns perturbations of a
configuration that remain within a behavioral fiber and therefore preserve
predictive behavior by construction.

The two notions address different sources of variation:

\begin{enumerate}
\item algorithmic stability concerns perturbations of the training data;

\item behavioral invariance concerns perturbations of realizations that
preserve predictive behavior.
\end{enumerate}

They should therefore be viewed as complementary rather than competing
perspectives.
\subsection{Interpretation}

Behavioral equivalence implies that multiple distinct configurations may
realize the same predictive behavior.

Whenever multiple realizations implement a fixed predictive behavior,
perturbations within a behavioral fiber leave both predictive behavior
and risk unchanged. Such realization multiplicity is closely related to
the parameter-space symmetries and non-identifiability phenomena
discussed in
\citep{neyshabur2017exploring,dinh2017sharp}.

The realization-level KL term and PAC-Bayes Z-information do not measure
behavioral invariance directly. Rather, they measure posterior
concentration relative to the conditional prior within behavioral
fibers. They are therefore distributional quantities defined through
conditional relative entropy and do not depend on any geometric
structure of the fibers. The geometric interpretation developed in
Section~\ref{sec:geometry} provides intuition, but the definitions
themselves are entirely measure-theoretic.

Viewed through the variational characterization of
Theorem~\ref{thm:optimal-representative}, behavioral invariance explains
why redistributing posterior mass within a behavioral fiber leaves
predictive behavior unchanged while potentially altering
configuration-space PAC-Bayes complexity.

Behavioral invariance, geometric interpretations based on symmetry and
behavior-preserving directions, and the realization-level KL term
therefore provide complementary geometric, variational, and
information-theoretic perspectives on the same realization structure
induced by behavioral equivalence.

\subsection{Summary}

Behavioral equivalence separates perturbations that alter predictive
behavior from perturbations that merely alter realization.

The latter leave predictive behavior and risk unchanged and, under
appropriate regularity assumptions, include tangent directions to
behavior-preserving curves within behavioral fibers.

The realization-level KL term and PAC-Bayes Z-information quantify
posterior concentration relative to the conditional prior within
behavioral fibers and therefore provide an information-theoretic
perspective on the same realization structure underlying behavioral
invariance.

Although behavioral invariance is distinct from generalization itself,
it helps explain why realization multiplicity appears naturally in the
behavior--realization decomposition: movement within behavioral fibers
changes realizations without changing predictive behavior.

\section{General Z-Information and Specialization to PAC-Bayes}
\label{app:general-zinfo}

This appendix places PAC-Bayes Z-information within a more general
measure-theoretic framework based on measurable maps, disintegration, and
relative entropy.

The purpose is conceptual rather than technical. It shows that the
behavior--realization decomposition developed in the main text is a
specialization of a general disintegration-based decomposition of relative
entropy. No results from this appendix are required for the development of the
main theory.

\subsection{A General Fiber Decomposition}

Let $\Theta$ be a measurable space equipped with probability measures
$P,Q\in\mathcal P(\Theta)$, and let
\[
r:\Theta\rightarrow\mathcal R
\]
be a measurable map into a measurable space $\mathcal R$.

For each $r\in\mathcal R$, the associated fiber is
\[
F_r
=
\{\theta\in\Theta:r(\theta)=r\}.
\]

The pushforward measures
\[
\pi_Q=Q\circ r^{-1},
\qquad
\pi_P=P\circ r^{-1}
\]
describe uncertainty over the image space $\mathcal R$.

Under the standard disintegration assumptions
\citep{kallenberg2002foundations,parthasarathy1967probability,
bogachev2007measure},
\[
Q(d\theta)
=
Q(d\theta\mid r)\,\pi_Q(dr),
\qquad
P(d\theta)
=
P(d\theta\mid r)\,\pi_P(dr).
\]

Applying the chain rule for relative entropy yields
\[
\mathrm{KL}(Q\|P)
=
\mathrm{KL}(\pi_Q\|\pi_P)
+
\mathbb E_{r\sim\pi_Q}
\!\left[
\mathrm{KL}
\bigl(
Q(\cdot\mid r)
\,\|\,P(\cdot\mid r)
\bigr)
\right].
\]

The second term measures divergence arising from the redistribution of
probability mass within fibers after the image-space distribution has
been fixed.

By the nonnegativity of KL divergence, the fiber-level term is always
nonnegative and vanishes if and only if
\[
Q(\cdot\mid r)=P(\cdot\mid r)
\]
for $\pi_Q$-almost every $r$.

\subsection{Specialization to Behavioral Equivalence}

The framework developed in the main text corresponds to the special case
in which the measurable map is the behavior map
\[
\beta:\Theta\rightarrow\mathcal K.
\]

The fibers
\[
F_k
=
\beta^{-1}(k)
\]
contain all configurations that induce the same predictive behavior.

In this setting, the negative fiber-level divergence becomes
\[
\mathcal Z_{\mathrm{PB}}(Q\|P)
=
-
\mathbb E_{k\sim\pi_Q}
\!\left[
\mathrm{KL}
\bigl(
Q(\cdot\mid k)
\,\|\,P(\cdot\mid k)
\bigr)
\right],
\]
which is precisely the PAC-Bayes Z-information introduced in
Section~\ref{sec:zinfo}.

Thus, PAC-Bayes Z-information is obtained by specializing the general
disintegration framework to the behavior map and taking the negative of
the resulting expected within-fiber KL divergence.

\subsection{Why the Behavioral Setting is Special}

Many measurable maps induce fiber decompositions, but the behavioral
setting possesses an additional property that is crucial for learning
theory.

By Proposition~\ref{prop:sufficiency},
\[
\beta(\theta)=\beta(\theta')
\quad\Longrightarrow\quad
L(\theta)=L(\theta'),
\qquad
\hat L_S(\theta)=\hat L_S(\theta').
\]

Consequently, both population and empirical risk factor through the
behavior map. Variation within a behavioral fiber does not directly
affect predictive behavior or risk.

This property gives the decomposition developed in
Sections~\ref{sec:zinfo} and~\ref{sec:pacbayes} its learning-theoretic
significance. The pushforward distribution over behaviors captures
behavior-level uncertainty relevant to prediction, whereas the
conditional distributions within fibers capture realization-level
variation among alternative realizations of the same predictive
behavior.
\subsection{Summary}

PAC-Bayes Z-information arises from a general disintegration-based
decomposition of relative entropy into image-space and fiber-level
components.

What distinguishes the behavioral setting is that predictive risk depends
only on behavior. This allows realization-level variation to be separated
from behavior-selection complexity and gives the within-fiber KL divergence a
direct learning-theoretic interpretation.

The PAC-Bayes framework developed in the main text is obtained by
specializing this general construction to the behavior map and
interpreting the resulting fibers as alternative realizations of the same
predictive behavior. In this specialization, the generic fiber-level
decomposition becomes the behavior--realization decomposition, the
image-space divergence becomes behavior-selection complexity, and the
within-fiber divergence becomes PAC-Bayes Z-information.

\section{Worked Examples and Explicit Computations}
\label{app:worked}

The preceding appendices establish the measure-theoretic foundations,
prove the main decomposition theorems, and develop their geometric and
information-theoretic consequences. This appendix serves a complementary purpose by presenting explicit
computations illustrating how the behavior--realization decomposition is
instantiated in concrete settings. The first example computes every quantity
appearing in the decomposition and illustrates the variational
characterization through fiber symmetrization. The second derives the
realization-level contribution associated with hidden-unit permutation
symmetry, complementing the illustrative discussion in the main text.

\subsection{A Finite Behavioral Quotient}
\label{app:worked:finite}

We begin with the simplest nontrivial example in which every quantity
appearing in the behavior--realization decomposition can be computed exactly.

Consider the configuration space
\[
\Theta
=
\{\theta_1,\theta_2,\theta_3,\theta_4\},
\]
and define the behavior map
\[
\beta(\theta_1)=\beta(\theta_2)=k_A,
\qquad
\beta(\theta_3)=\beta(\theta_4)=k_B.
\]

The behavioral fibers are therefore
\[
\beta^{-1}(k_A)
=
\{\theta_1,\theta_2\},
\qquad
\beta^{-1}(k_B)
=
\{\theta_3,\theta_4\}.
\]

Suppose the prior is
\[
P
=
\left(
\frac14,
\frac14,
\frac14,
\frac14
\right),
\]
and the posterior is
\[
Q
=
(0.6,0,0.4,0).
\]

Thus, the posterior assigns all of its probability mass to a single
configuration within each behavioral fiber.

\paragraph{Step 1: Behavior-Level Distributions.}

Applying the behavior map gives
\[
\pi_P=(0.5,0.5),
\qquad
\pi_Q=(0.6,0.4).
\]

Hence,
\[
\mathrm{KL}(\pi_Q\|\pi_P)
=
0.6\log\frac{0.6}{0.5}
+
0.4\log\frac{0.4}{0.5}
\approx
0.0201.
\]

\paragraph{Step 2: Fiber Conditional Distributions.}

Within each behavioral fiber,
\[
P(\cdot\mid k_A)
=
P(\cdot\mid k_B)
=
\left(
\frac12,\frac12
\right),
\]
whereas
\[
Q(\cdot\mid k_A)
=
Q(\cdot\mid k_B)
=
(1,0).
\]

Therefore,
\[
\mathrm{KL}
\!\left(
Q(\cdot\mid k_i)
\,\middle\|\,
P(\cdot\mid k_i)
\right)
=
\log2,
\qquad
i=A,B.
\]

Taking the expectation over
\(\pi_Q\)
gives
\[
\mathbb E_{k\sim\pi_Q}
\!\left[
\mathrm{KL}
\!\left(
Q(\cdot\mid k)
\,\middle\|\,
P(\cdot\mid k)
\right)
\right]
=
\log2
\approx
0.6931.
\]

Hence,
\[
\mathcal Z_{\mathrm{PB}}
=
-\log2.
\]

\paragraph{Step 3: The KL Decomposition.}

The classical divergence is
\[
\mathrm{KL}(Q\|P)
=
0.6\log\frac{0.6}{0.25}
+
0.4\log\frac{0.4}{0.25}
\approx
0.7132.
\]

Thus,
\[
\mathrm{KL}(Q\|P)
=
\mathrm{KL}(\pi_Q\|\pi_P)
-
\mathcal Z_{\mathrm{PB}},
\]
or numerically,
\[
0.7132
=
0.0201
+
0.6931.
\]

\paragraph{Step 4: Fiber Symmetrization.}

The fiber-symmetrized posterior is
\[
Q^\star
=
(0.3,0.3,0.2,0.2).
\]

Since
\[
\pi_{Q^\star}
=
\pi_Q,
\]
both posteriors induce identical predictive behavior.

Moreover,
\[
Q^\star(\cdot\mid k)
=
P(\cdot\mid k)
\]
for every behavioral fiber.

Consequently,
\[
\mathcal Z_{\mathrm{PB}}(Q^\star\|P)=0,
\]
and
\[
\mathrm{KL}(Q^\star\|P)
=
\mathrm{KL}(\pi_Q\|\pi_P)
\approx0.0201.
\]

The principal quantities appearing in the decomposition are summarized in
Table~\ref{tab:worked:finite}.

\begin{table}[h]
\centering
\caption{Summary of the behavior--realization decomposition for Example~\ref{app:worked:finite}.}
\label{tab:worked:finite}
\small
\begin{tabular}{lc}
\toprule
Quantity & Value\\
\midrule
$\mathrm{KL}(Q\|P)$ & $0.7132$\\
$\mathrm{KL}(\pi_Q\|\pi_P)$ & $0.0201$\\
$-\mathcal Z_{\mathrm{PB}}(Q\|P)$ & $0.6931$\\
$\mathrm{KL}(Q^\star\|P)$ & $0.0201$\\
\bottomrule
\end{tabular}
\end{table}

\paragraph{Interpretation.}

This example illustrates every component of the
behavior--realization decomposition. Although the posterior $Q$ and its
fiber-symmetrized representative $Q^\star$ induce identical predictive
behavior, their classical PAC-Bayes complexities differ by more than a factor
of thirty. In this example, over $97\%$ of the classical PAC-Bayes complexity
arises from realization-level divergence rather than uncertainty over
predictive behavior. Fiber symmetrization removes this realization-level
contribution while preserving predictive behavior, thereby realizing the
variational characterization established in
Theorem~\ref{thm:optimal-representative}.

\subsection{Explicit Computation for Hidden-Unit Permutation Symmetry}
\label{app:worked:permutations}

Example~\ref{app:worked:finite} computed every component of the behavior–realization decomposition in a finite setting. We now isolate the realization-level term for the hidden-unit permutation symmetry discussed in the main text.

Consider a behavioral fiber generated by all permutations of $h$
exchangeable hidden units. Under the idealized assumptions described in
Section~\ref{sec:zinfo}, suppose the conditional prior assigns equal
probability to every realization within the fiber,
\[
P(\cdot\mid k)
=
\left(
\frac1{h!},
\ldots,
\frac1{h!}
\right),
\]
while the conditional posterior concentrates on one representative
realization,
\[
Q(\cdot\mid k)
=
(1,0,\ldots,0).
\]

The conditional KL divergence within the fiber is therefore
\[
\begin{aligned}
\mathrm{KL}
\!\left(
Q(\cdot\mid k)
\,\middle\|\,
P(\cdot\mid k)
\right)
&=
\sum_{i=1}^{h!}
Q_i
\log
\frac{Q_i}{P_i}
\\
&=
\log(h!).
\end{aligned}
\]

If the same conditional structure holds for
$\pi_Q$-almost every behavioral fiber, then
\[
-\mathcal Z_{\mathrm{PB}}
=
\mathbb E_{k\sim\pi_Q}
\!\left[
\mathrm{KL}
\!\left(
Q(\cdot\mid k)
\,\middle\|\,
P(\cdot\mid k)
\right)
\right]
=
\log(h!),
\]
and consequently
\[
\mathrm{KL}(Q\|P)
=
\mathrm{KL}(\pi_Q\|\pi_P)
+
\log(h!).
\]

Using Stirling's approximation,
\[
\log(h!)
=
h\log h
-
h
+
O(\log h).
\]

For illustration,
\[
\log(10!)
\approx
15.10,
\qquad
\log(100!)
\approx
363.74.
\]

\paragraph{Interpretation.}

This computation makes explicit the approximation summarized in the main
text. The additional $\log(h!)$ term arises entirely from concentration of
posterior probability within a behavioral fiber and does not reflect greater
uncertainty over predictive behavior. As the degree of symmetry increases,
realization multiplicity can therefore contribute substantially to the
classical PAC-Bayes KL divergence even when predictive behavior remains
unchanged.

\end{document}